\definecolor{azure}{rgb}{0.9, 0.95, 1.0}
\theoremstyle{plain}
\newtheorem{theorem}{Theorem}[section]
\newtheorem{proposition}[theorem]{Proposition}
\newtheorem{lemma}[theorem]{Lemma}
\newtheorem{corollary}[theorem]{Corollary}
\theoremstyle{definition}
\newtheorem{definition}[theorem]{Definition}
\newtheorem{assumption}[theorem]{Assumption}
\newtheorem{example}[theorem]{Example}
\theoremstyle{remark}
\renewcommand{\thefootnote}{\fnsymbol{footnote}}
\DeclareMathOperator*{\argmin}{arg\,min}
\title{Reward Modeling with Ordinal Feedback: Wisdom of the Crowd}
\author{Shang Liu$^{*,1}$, Yu Pan$^{*,2}$, Guanting Chen$^{3}$, Xiaocheng Li$^{1}$}
\date{\small
Imperial College Business School, Imperial College London$^{1}$ \\  Department of Intelligent Transportation, HKUST-GZ$^{2}$ \\  
Department of Statistics and Operations Research, University of North Carolina$^{3}$
}
\begin{document}
\maketitle
\onehalfspacing

\def\thefootnote{*}\relax\footnotetext{Equal contribution. Corresponding to Shang Liu (s.liu21@imperial.ac.uk) and Yu Pan (yupan@hkust-gz.edu.cn). The authors thank Hengzhi He and Zhongze Cai for providing helpful discussions and improvements to the manuscript.}

\begin{abstract}
Learning a reward model (RM) from human preferences has been an important component in aligning large language models (LLMs). The canonical setup of learning RMs from pairwise preference data is rooted in the classic Bradley-Terry (BT) model that accepts binary feedback, i.e., the label being either Response 1 is better than Response 2, or the opposite. Such a setup inevitably discards potentially useful samples (such as ``tied'' between the two responses) and loses more fine-grained information  (such as ``slightly better''). In this paper, we propose a framework for learning RMs under \textit{ordinal feedback} which generalizes the case of binary preference feedback to any arbitrary granularity. Specifically, we first identify a marginal unbiasedness condition, which generalizes the assumption of the BT model in the existing binary feedback setting. The condition validates itself via the sociological concept of the wisdom of the crowd. Under the condition, we develop a natural probability model for pairwise preference data under ordinal feedback and analyze its properties. We prove the statistical benefits of ordinal feedback in terms of reducing the Rademacher complexity compared to the case of binary feedback. The proposed learning objective and the theory also extend to hinge loss and direct policy optimization (DPO). In particular, the theoretical analysis may be of independent interest when applying to a seemingly unrelated problem of knowledge distillation to interpret the bias-variance trade-off therein. The framework also sheds light on writing guidance for human annotators. Our numerical experiments validate that fine-grained feedback leads to better reward learning for both in-distribution and out-of-distribution settings. Further experiments show that incorporating a certain proportion of samples with tied preference boosts RM learning.
\end{abstract}

\section{Introduction}
\label{sec:intro}

Reinforcement learning from human feedback (RLHF) \citep{christiano2017deep, ziegler2019fine, askell2021general, ouyang2022training} is vital to aligning large language models (LLMs) with human preferences. The RLHF involves either explicitly training a reward model (RM) from human preferences data \citep{ouyang2022training} or implicitly using the LLM itself as one \citep{rafailov2024direct}. However, there is an inconsistency between current ways of collecting human preference data and the training of reward models. For example, the Llama team collects fine-grained human feedback: they not only collect the preferred response but also 4 levels named ``significantly better'', ``better'', ``slightly better'', and ``marginally better'' \citep{llama2024llama}, while the post-training of Llama 3 treats ``significantly better'' and ``better'' as the same and discard all the others. Such a process wastes the potentially useful samples that cost the human annotators additional time and also it may loss the useful information hidden in the preference level.

In this paper, we study the problem of reward modeling under ordinal feedback. Specifically, we relate the annotator's preference feedback with the probability that one response is better than the other on a population level. We introduce a marginal unbiasedness assumption as the only assumption that validates the probability setup of the ordinal feedback system. The assumption is rooted in the sociological concept of the wisdom of the crowd. Under the assumption, we analyze the properties of the probability model of ordinal feedback. We propose a learning objective for reward modeling with ordinal feedback, and the objective function naturally generalizes the case of binary feedback. Theoretically, we establish the advantage of ordinal feedback, which draws an interesting connection with the literature on soft labeling and knowledge distillation.

Our paper is organized as follows:
\begin{enumerate}
\item In Section \ref{sec:setup}, we model the general ordinal feedback by relating the feedback with the probability that a certain response is better than the other on a population level. The binary feedback ($\mathcal{Z} = \{0, 1\}$) is extended to the general \textit{ordinal feedback} ($\mathcal{Z} = \{z_j\}_{j=1}^m$ for $0 \leq z_1 < \dots < z_m \leq 1$), providing a way to transform the qualitative label into quantitative ones.
\item In Section \ref{sec:probability_model}, we build up the probability model of ordinal feedback. We first set the oracle probability as the standard preference model and present the only assumption (Assumption \ref{assm:wisdom_of_crowd}) that the annotators in the marginal sense are giving an unbiased estimation of that oracle. Such an assumption (which we call ``wisdom of the crowd'') is only a generalization of the binary case that regards the feedback as a Bernoulli random variable. In this light, we suggest revising the annotation guideline by providing a direct quantitative description of the qualitative opinions. Furthermore, under the assumption, we prove the existence and the uniqueness (up to convex combinations) of the ordinal feedback.
\item In Section \ref{sec:benefits_of_ord}, we prove the statistical benefits of the ordinal feedback. More specifically, the Rademacher complexity is reduced if the loss function satisfies the affinity condition (which is fulfilled by the common cross-entropy loss). Such a conclusion also holds for direct policy optimization (DPO). The result is proved via a special coupling argument that we call \textit{hierarchical expectation}, which also provides a new bias-variance trade-off in knowledge distillation and soft labeling.
\item In Section \ref{sec:experiments}, we conduct two numerical experiments. The first experiment sets up four different ordinal feedback systems (oracle, 5-level, 3-level, and binary) and validates the theoretical findings that fine-grained ordinal feedback benefits RM training by achieving higher accuracies in both in-distribution (ID) and out-of-distribution (OOD) settings. The second experiment mixes the training data with a proportion of tied and untied samples. With the same number of training samples, we find out that a certain level of tied samples boosts RM learning.
\end{enumerate}

\section{Problem Setup}
\label{sec:setup}

Consider the task of reward modeling based on the pairwise preference data. Each data sample consists of a tuple 
$$(x, y_1, y_2, z)$$
where $x\in \mathcal{X}$ denotes a prompt, $y_1, y_2\in \mathcal{Y}$ are two candidate responses to the prompt $x$, and $Z$ is a random variable (taking values in $\mathcal{Z}$) that denotes the feedback (generated by either human annotators or advanced AI models) indicating the preference between $y_1$ and $y_2$. The feedback $Z$ can be viewed as a proxy of the probability that $y_1$ is better than $y_2$ for the prompt $x$, denoted by $\mathbb{P}(y_1 \succ y_2 | x)$.

The task of reward modeling thus refers to the learning of a reward function $r_{\theta}(x,y): \mathcal{X}\times \mathcal{Y}\rightarrow \mathbb{R}$ with parameter $\theta\in \Theta$ from 
an annotated dataset $$\mathcal{D}_{\mathcal{Z}} \coloneqq \{(x_i, y_{i,1}, y_{i, 2}, Z_i)\}_{i=1}^n.$$
The prevalent way to relate the reward model with the preference probability is via the Bradley-Terry model \citep{bradley1952rank}
\[
\mathbb{P}(y_1 \succ y_2 | x) \approx \frac{\exp\big(r_{\theta}(x, y_1)\big)}{\exp\big(r_{\theta}(x, y_1)\big) + \exp\big(r_{\theta}(x, y_2)\big)}
\]
where we approximate the probability with the softmax reward values on the right-hand side.

\textbf{Binary feedback}: In the canonical setup (\cite{bai2022training, ouyang2022training} among others), the feedback $Z$ takes binary values, i.e., $\mathcal{Z} = \{0,1\}$. Here one assumes $Z_i$ is a Bernoulli random variable such that
\begin{equation}
\mathbb{P}(Z_i=1) = 1-\mathbb{P}(Z_i=0) = \mathbb{P}\left(y_{i,1} \succ y_{i,2} | x_i\right).
\label{eqn:Z_binary}
\end{equation}
This assumption has been the backbone of the training of many mainstream reward models. 

\textbf{Ordinal feedback}: In this paper, we consider the setting of \textit{ordinal feedback} which gives a richer feedback structure than the binary feedback above and is defined as follows.

\begin{definition}[Ordinal Feedback]
\label{def:ord_fb}
Suppose the feedback $Z$ takes values in $\mathcal{Z} \coloneqq \{z_1, \dots, z_m\}$ where $0\le z_1 < \cdots < z_m\le 1$, we call $Z$ an ordinal feedback and $\mathcal{Z}$ the ordinal feedback set.
\end{definition}

The binary feedback is a special case of the ordinal feedback by letting $m=2$, $z_1=0$, and $z_2=1.$ The motivation for us to introduce this ordinal feedback definition is to capture the richer annotation feedback options in practice where the annotator is allowed to choose from 
$$\mathcal{Z}_{\text{text}} = \{\text{better than, same as, worse than}\}$$
$$\mathcal{Z}_{\text{text}} = \{\text{better than, slightly better, slightly worse, worse than}\}$$
to describe the preference between responses $y_1$ and $y_2.$ We defer to the next section the questions of how to match $\mathcal{Z}_{\text{text}}$ with $\mathcal{Z}$ and how to determine the values of $z_1,...,z_m$ in $\mathcal{Z}$. 

Suppose for now, we have the dataset $\mathcal{D}$ where $Z_i$'s take values in $\mathcal{Z}$. We propose to learn the reward model by minimizing the following objective function
\begin{equation}
\min_{\theta} \ \sum_{i=1}^n - Z_{i}\cdot \log\left(\sigma\left(r_{\theta}(x_i, y_{i,1}) - r_{\theta}(x_i, y_{i,2})\right)\right) - (1-Z_{i})\cdot \log\left(\sigma\left(r_{\theta}(x_i, y_{i,2}) - r_{\theta}(x_i, y_{i,1})\right)\right), 
\label{eqn:learning_objective}
\end{equation}
where $\sigma(\cdot)$ is the sigmoid function such that $\sigma(x) = \exp(x)/(1+\exp(x))$.

When the feedback is binary $Z_i\in\{0,1\}$, the above objective function reduces to exactly how people learn the reward models under the Bradley-Terry assumption. When the feedback is soft and takes value in $[0,1]$, we will demonstrate how such a loss function characterizes the ordinal feedback options of $\mathcal{Z}_{\text{text}}.$ 

Intuitively, this objective function better utilizes the annotated data and avoids the shortcomings of two alternative heuristics: (i) Discarding all the samples annotated as ``same as'' for that these samples do not integrate with the binary feedback Bradley-Terry model; $\rightarrow$ this causes loss of samples. (ii) Combining ``better than'' and ``slightly better'' as better (and the same for worse); $\rightarrow$ this causes loss of information.

This shift from binary to ordinal as above, though natural, raises three questions: 
\begin{itemize}
\item[-] Probability: The binary feedback is rooted in the Assumption \eqref{eqn:Z_binary}. What is the underlying probability model to connect $\mathcal{Z}$ and $\mathcal{Z}_{\text{text}}$ with the probability $\mathbb{P}\left(y_{1} \succ y_{2} | x_i\right)$?
\item[-] Annotation: How does the thinking process of human annotators relate to the probability model of ordinal feedback from a social choice perspective?
\item[-] Learning: How does this ordinal feedback setup affect the learning of the reward model from a machine-learning perspective?
\end{itemize}

In Section \ref{sec:probability_model}, we address the first two questions, and in Section \ref{sec:benefits_of_ord}, we address the last question. In Section \ref{sec:experiments}, we present numerical experiments. For the main paper, we study the reward modeling under the cross-entropy loss as the objective function above, and in Appendix \ref{apd:hinge_experiments}, we discuss how the ordinal feedback setup can be addressed under hinge loss.

\section{Probability Model of Ordinal Feedback}
\label{sec:probability_model}

We first define the oracle feedback model as 
$$z_{{\text{oracle}}}(x,y_1,y_2) \coloneqq \mathbb{P}\left(y_{1} \succ y_{2} | x\right).$$
This is \textit{the} preference model that one aims to learn, regardless of whether assuming the Bradley-Terry model or whatever other preference/reward model. Here we should think the probability space being the whole population that use the language, and a human annotator as a random draw from the population. 

\begin{assumption}[Ordinal feedback probability model -- wisdom of the crowd]
\label{assm:wisdom_of_crowd}
We assume the ordinal feedback $Z$ defined in Definition \ref{def:ord_fb} satisfies
\[
\mathbb{E}[Z | (x, y_1, y_2)] = z_{\text{oracle}}(x, y_1, y_2) \quad \text{for any } (x, y_1, y_2) \in \mathcal{X} \times \mathcal{Y}^2.
\]
\end{assumption}

The assumption is the only one we make for the ordinal feedback model. As we will see, this one assumption alone is sufficient to define the probability model of the ordinal feedback setting and validate the learning of the reward model. To interpret the assumption, we emphasize that it is not stricter than the existing assumption people impose for the binary feedback model. Specifically, under the binary feedback model where $Z$ takes values in $\mathcal{Z}=\{0,1\}$, Assumption \ref{assm:wisdom_of_crowd} is equivalent to the Assumption \eqref{eqn:Z_binary}. 

To see another example, consider the set $\mathcal{Z}=\{0,0.5,1\}$  where the labels $0$ and $1$ denote ``better'' and ``worse'' respectively, and the label $0.5$ denotes ``same as''. The assumption then requires that with $Z$ taking values in this new $\mathcal{Z}$ with an additional $0.5$ option, its expectation matches the oracle value $z_{\text{oracle}}$ on the sample. Under the general ordinal feedback setting, human annotators label their preferences on different scales, i.e., 
$$\mathcal{Z}=\{0,1\}, \ \{0,0.5,1\}, \text{ or } \{0,0.25,0.5,0.75,1\}.$$
The assumption says that the change of scales does not introduce bias that twists the oracle preference on the population level.

\textbf{Sociological interpretation}: We name the assumption by ``wisdom of the crowd''; the concept was first coined by the article \textit{Vox Populi} \citep{galton1907vox} for a social experiment under the title ``the voice of the people''. The social experiment is about a weight-judging competition conducted in England for random people to guess the weight of an ox. The average of all 787 guesses was 1,197 pounds, while the actual weight was 1,198 pounds, as shown in Figure \ref{fig:ox_weight_and_crowd_feedback}. Each individual's guess can be far off the target yet the population average tends to be very accurate. For the context of preference annotation, Assumption \ref{assm:wisdom_of_crowd} and the current practice of human annotation exercise the wisdom of the crowd in two folds: First, each individual annotator has no access to the population preference $z_{\text{oracle}}$, but their annotation can be viewed as an unbiased random realization of $z_{\text{oracle}}$. Second, such unbiasedness does not change if a difference annotation scale $\mathcal{Z}$ (feedback set) is used. 




\begin{figure}[ht!]
\centering
\includegraphics[width=\linewidth]{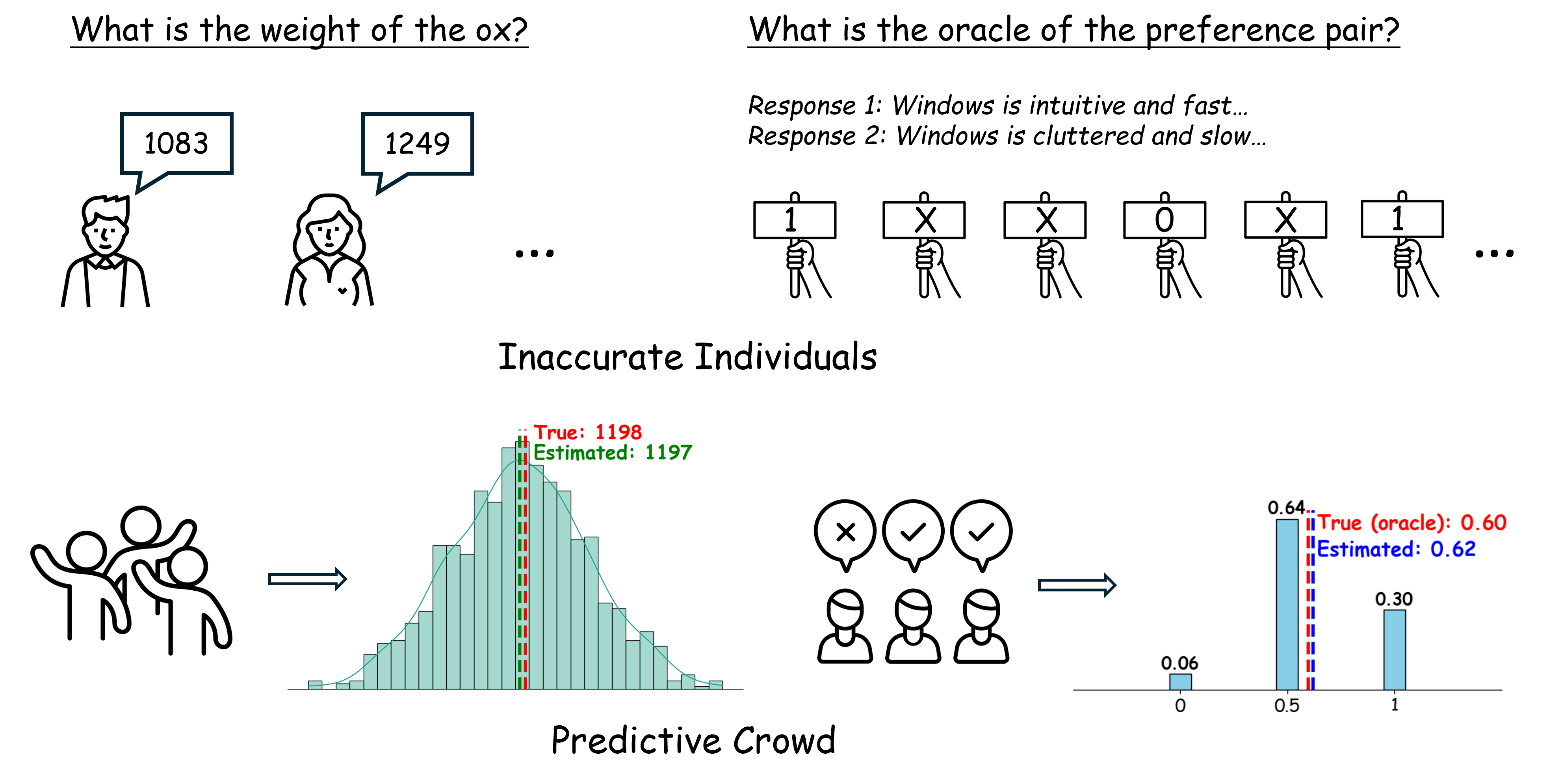}
\caption{\small Wisdom of the crowd. Left: Each individual guess can be far off the target for an ox-weight-guessing social experiment, but the average tends to be very accurate. Each human annotator has not access to the population oracle preference model $z_{\text{oracle}}$, but their annotation constitutes an unbiased realization of $z_{\text{oracle}}$.}
\label{fig:ox_weight_and_crowd_feedback}
\end{figure}

\subsection{Implications on annotation guidance}

In practice, annotators label in the set $\mathcal{Z}_{\text{text}}$ (e.g. $\{\text{better, same as, worse}\}$), and this results in a gap towards the set $\mathcal{Z}=\{z_1,...,z_m\}$ used in the learning of the reward model $\eqref{eqn:learning_objective}$. For the binary or 3-level feedback setting, the following conversion is natural and reflects the thinking process of the annotator:
$$\mathcal{Z}_{\text{text}} = \{\text{better than, worse than}\} \Leftrightarrow \mathcal{Z} = \{0,1\},$$
$$\mathcal{Z}_{\text{text}} = \{\text{better than, same as, worse than}\} \Leftrightarrow \mathcal{Z} = \{0, 0.5, 1\}.$$
For a more fine-grained 5-level feedback setting, it is less clear whether one can convert as follows, 
$$\mathcal{Z}_{\text{text}} = \{\text{better than, slightly better, same, slightly worse, worse than}\} \Leftrightarrow \mathcal{Z} = \{0, 0.25, 0.5, 0.75, 1\}.$$

In this light, our ordinal feedback model and Assumption \ref{assm:wisdom_of_crowd} provide the following insights in guiding the annotations. Rather than to provide vague wording of ``better than'' or ``sightly better'', one can write the following in the guidance to the human annotators:

\begin{mdframed}[linewidth=1.5pt,linecolor=azure,backgroundcolor=azure]
\textbf{Annotation guideline:} The label ``slightly better'' represents that 75\% of the population think response $y_1$ is better than response $y_2.$ The label ``slightly worse'' represents that 25\% of the population think response $y_1$ is better than response $y_2.$ 
\end{mdframed}

Such an additional guidance endows the labels in $\mathcal{Z}_{\text{text}}$ a numerical meaning. The corresponding numerical values can be directly used in the learning objective \eqref{eqn:learning_objective}.

\subsection{Universal existence of feedback probability model}

In this subsection, we detour from the discussions of reward modeling and show the universal existence of probability models that satisfy Assumption \ref{assm:wisdom_of_crowd}, which ensures the assumption is well-defined. The following theorem states that for any ordinal feedback set $\mathcal{Z}$ and any oracle model $z_{\text{oracle}},$ there exists a probability model satisfying Assumption \ref{assm:wisdom_of_crowd}.
\begin{theorem}
\label{thm:unbiased_interpolation}
For any ordinal feedback set $\mathcal{Z} = \{z_1, \dots, z_m\}$ and any oracle model $z_{\text{oracle}}(x, y_1, y_2)$, one can construct an ordinal feedback $Z$ as a random variable that satisfies Assumption \ref{assm:wisdom_of_crowd} in the following way. Specifically, if $z_{\text{oracle}} \in [z_j, z_k]$ for some $j, k \in [m]$, then one can set the marginal probability measure $\mu_{j, k}(z) \coloneqq \mathbb{P}(Z = z | (x, y_1, y_2))$ to be
\[
\mu_{j, k}(z) =
\begin{cases}
(z_k - z_{\text{oracle}})\big/(z_k - z_j), \quad &\text{if }z=z_j,\\
(z_{\text{oracle}} - z_j)\big/(z_k - z_j), \quad &\text{if }z=z_k,\\
0, \quad & \text{otherwise}.
\end{cases}
\]
The ordinal feedback $Z$ fulfills Assumption \ref{assm:wisdom_of_crowd}. 

On the other hand, any ordinal feedback satisfying Assumption \ref{assm:wisdom_of_crowd} must be a convex combination of such constructions. More specifically, for any ordinal feedback $Z$ with marginal probability measure $\mu(z) \coloneqq \mathbb{P}(Z = z | (x, y_1, y_2))$ satisfying Assumption \ref{assm:wisdom_of_crowd}, there must exist non-negative real numbers $\sum_{j,k} \alpha_{j, k} = 1$ such that
\[
\mu = \sum_{j,k} \ \alpha_{j, k} \ \mu_{j, k},
\]
where the summation is made for all $(j, k)$ pairs such that $z_{\text{oracle}} \in [z_j, z_k]$.
\end{theorem}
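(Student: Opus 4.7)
The plan is to prove the two assertions separately, using a direct computation for the first and a convex-geometry argument for the second.

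For the existence part, I would first check that $\mu_{j,k}$ is a legitimate probability measure: since $z_{\text{oracle}} \in [z_j, z_k]$, both $(z_k - z_{\text{oracle}})/(z_k - z_j)$ and $(z_{\text{oracle}} - z_j)/(z_k - z_j)$ are non-negative, and they visibly sum to one. Then I would compute
\[
\mathbb{E}_{Z \sim \mu_{j,k}}[Z] = z_j \cdot \frac{z_k - z_{\text{oracle}}}{z_k - z_j} + z_k \cdot \frac{z_{\text{oracle}} - z_j}{z_k - z_j},
\]
which simplifies algebraically to $z_{\text{oracle}}$, thereby verifying Assumption \ref{assm:wisdom_of_crowd}. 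This part is routine.

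For the uniqueness-up-to-convex-combinations part, the plan is to exhibit the set of all marginal probability measures $\mu$ on $\mathcal{Z}$ satisfying the assumption as a convex polytope, and then show that the $\mu_{j,k}$ are exactly its vertices. Concretely, fix $(x, y_1, y_2)$ and write $p_i = \mu(z_i)$. The feasible set
\[
P := \Big\{ p \in \mathbb{R}^m_{\ge 0} : \sum_{i=1}^m p_i = 1, \ \sum_{i=1}^m z_i p_i = z_{\text{oracle}} \Big\}
\]
is a bounded polyhedron, hence a convex polytope, and by Minkowski's theorem it equals the convex hull of its vertices. So it suffices to identify those vertices and show they coincide with $\{\mu_{j,k}\}$.

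The key step is the vertex characterization. A basic feasible solution of a linear program with $m$ non-negativity constraints and $2$ equality constraints has at most two positive coordinates, so each vertex $p$ of $P$ is supported on at most two indices. If the support is $\{j, k\}$ with $j < k$, then the pair of equations $p_j + p_k = 1$ and $z_j p_j + z_k p_k = z_{\text{oracle}}$ determines $p$ uniquely as $\mu_{j,k}$, and non-negativity forces $z_{\text{oracle}} \in [z_j, z_k]$. If the support is a singleton $\{j\}$, then $z_{\text{oracle}} = z_j$ and $p = \delta_{z_j}$, which also arises as $\mu_{j,k}$ for any admissible $k$ (since the formula makes $\mu_{j,k}(z_k) = 0$ in that case). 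Once the vertices are matched to the family in the statement, Minkowski's representation yields the claimed decomposition $\mu = \sum_{j,k} \alpha_{j,k} \mu_{j,k}$ with $\alpha_{j,k} \ge 0$ summing to one.

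The main obstacle I expect is keeping the degenerate cases straight --- in particular when $z_{\text{oracle}}$ equals some $z_j$, when the support is a single index, or when the polytope is lower-dimensional than the generic $m-2$. I would handle these uniformly by adopting the convention $\mu_{j,j} := \delta_{z_j}$ whenever $z_{\text{oracle}} = z_j$ and verifying that the formula for $\mu_{j,k}$ remains a well-defined probability vector at the boundary of $[z_j, z_k]$; after this bookkeeping, the vertex identification goes through cleanly and the convex combination is obtained by invoking Minkowski once more.
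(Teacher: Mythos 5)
Your existence argument is the same routine computation as the paper's, and it is correct. For the converse, however, you take a genuinely different route. The paper proves the decomposition constructively: after reducing to the case $z_{\text{oracle}}\notin\mathcal{Z}$, it runs an induction on the number of atoms on each side of $z_{\text{oracle}}$, at each step selecting the index $i_1=\argmin_i |z_i-z_{\text{oracle}}|\cdot\mu(z_i)$, peeling off the two-point measure $\alpha_{i_1,m_1+1}\,\mu_{i_1,m_1+1}$, renormalizing the remainder, and recursing; this produces the coefficients $\alpha_{j,k}$ explicitly. You instead observe that the feasible set $P=\{p\in\mathbb{R}^m_{\ge 0}: \sum_i p_i=1,\ \sum_i z_i p_i=z_{\text{oracle}}\}$ is a bounded polyhedron, hence the convex hull of its vertices, and that each vertex is a basic feasible solution supported on at most two indices (the relevant columns $(1,z_j)^\top$, $(1,z_k)^\top$ are independent because the $z_i$ are distinct --- a detail worth stating), whence the vertices are exactly the admissible $\mu_{j,k}$ and Minkowski's theorem gives the decomposition. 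Both arguments are valid and handle the degenerate cases ($z_{\text{oracle}}\in\mathcal{Z}$, singleton support) in essentially the same way. Your polytope argument is shorter and more conceptual, and it makes transparent \emph{why} two-point measures are the extreme objects; the paper's induction is more elementary (no appeal to the representation theorem for polytopes) and has the advantage of being constructive, yielding an explicit recipe for the weights $\alpha_{j,k}$, which is what the experiments in Section \ref{subsec:experiment_setup} actually implement.
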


The theorem not only verifies the existence of a probability distribution that satisfies Assumption \ref{assm:wisdom_of_crowd}, but also provides a full characterization of all the distributions
that satisfy the assumption. It says that all such distributions should be (convex combinations of) some two-point distributions, where the weights are assigned to interpolate the linear system
\[
\begin{cases}
\mu_{j,k}(z_j) \cdot z_j + \mu_{j,k}(z_k) \cdot z_k = z_{\text{oracle}},\\
\mu_{j,k}(z_j) + \mu_{j,k}(z_k) = 1.
\end{cases}
\]
For example, consider a reward model with ties where $\mathcal{Z} = \{0, 0.5, 1\}$. If the oracle feedback is, say, $0.8$, then we can construct a feedback $Z$ with probability masses $\mu(0.5) = 0.4$ and $\mu(1) = 0.6$ such that the unbiased assumption is fulfilled. In this way, if we are (in an ideal world) given an oracle model  $z_{\text{oracle}}$, we can generate different unbiased ordinal feedback problem models accordingly.

\section{Statistical Benefits of Ordinal Feedback}

\label{sec:benefits_of_ord}

Intuitively, the more fine-grained ordinal feedback offers more information and should bring more benefits to the learning of a reward model. In this section, we provide a theoretical explanation that pinpoints the benefits, which are not restrictive to the traditional RLHF but also applicable to DPO. The theory not only captures the benefits of the ordinal feedback model but also provides insights into the technique of \textit{soft labeling} in knowledge distillation \citep{ba2014deep, hinton2015distilling, muller2019does, phuong2019towards, yuan2020revisiting, zhou2021rethinking}, which can be of independent interests. In short, the theoretical result says that any feedback model satisfying Assumption \ref{assm:wisdom_of_crowd} brings statistical benefits compared to the canonical binary feedback model.


Before we proceed, we introduce the following definition for the loss function. 

\begin{definition}[Feedback Affinity]
\label{def:affine}
We say a loss function $\ell(Z, z)$ is affine to the feedback random variable $Z$ if it is affine to $Z$ for any $z \in [0, 1]$. In other words,
\begin{equation}
\mathbb{E}_{Z}\big[\ell(Z, z)\big] = \ell\big(\mathbb{E}[Z], z) \text{ for any } z\in[0,1].\label{eqn:loss_affine}
\end{equation}
\end{definition}

Such an affinity condition is not a restrictive one. The two commonly used loss functions, the cross-entropy loss and (a generalized version of) the hinge loss, both satisfy the feedback affinity. We defer the detailed definitions of the loss functions and the proof of the following proposition to Appendix \ref{subapd:loss_affine}.

\begin{proposition}
\label{prop:loss_are_affine}
The cross-entropy loss and the hinge loss satisfy the feedback affinity in Definition \ref{def:affine}.
\end{proposition}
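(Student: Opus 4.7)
The plan is to verify the affinity condition directly by rewriting each loss as a manifestly affine function of $Z$ once the second argument is held fixed, after which linearity of expectation yields \eqref{eqn:loss_affine} in one line. In other words, I expect no real analytic content beyond an algebraic re-grouping of terms; the point of the proposition is simply to record that the two canonical losses fit into the abstract framework of Definition \ref{def:affine}.

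For the cross-entropy loss, I would first reconcile notation: the per-sample summand in \eqref{eqn:learning_objective} is written in terms of $r_\theta$, while Definition \ref{def:affine} expects a scalar second argument $z$. Setting $\hat{z} \coloneqq \sigma\big(r_\theta(x,y_1) - r_\theta(x,y_2)\big) \in (0,1)$ and using $\sigma(-a) = 1 - \sigma(a)$, the summand collapses to
$$\ell_{\text{ce}}(Z, \hat{z}) \;=\; -Z\log \hat{z} \;-\; (1-Z)\log(1-\hat{z}).$$
Grouping by powers of $Z$ gives $\ell_{\text{ce}}(Z,\hat{z}) = \big[\log(1-\hat{z}) - \log\hat{z}\big]\cdot Z + \big(-\log(1-\hat{z})\big)$, which is visibly of the form $a(\hat{z})\,Z + b(\hat{z})$. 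Taking $\mathbb{E}_Z$ and pulling it through the $\hat{z}$-dependent coefficients delivers $\mathbb{E}_Z[\ell_{\text{ce}}(Z,\hat{z})] = \ell_{\text{ce}}(\mathbb{E}[Z], \hat{z})$, which is exactly \eqref{eqn:loss_affine}.

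For the hinge loss, I would invoke the generalized definition $\ell_{\text{hinge}}(Z, \hat{z}) = Z\cdot \max(1-\hat{z}, 0) + (1-Z)\cdot \max(1+\hat{z}, 0)$ recalled in the appendix (with $\hat{z}$ taken to be the relevant reward difference). Since the two $\max$ terms depend only on $\hat{z}$ and not on $Z$, the loss again has the form $a(\hat{z})\,Z + b(\hat{z})$, and the same one-line expectation argument closes this case.

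The key observation is that in both cases $Z$ enters only as a coefficient multiplying quantities that depend purely on the prediction, so linearity (hence affinity) in $Z$ is built in by design. Consequently I do not expect any real obstacle; the only subtlety is the notational alignment between the $r_\theta$-based presentation in \eqref{eqn:learning_objective} and the abstract signature $\ell(Z, z)$ of Definition \ref{def:affine}, which is resolved by the substitutions $\hat{z} = \sigma(r_\theta(x,y_1)-r_\theta(x,y_2))$ for cross-entropy and $\hat{z} = r_\theta(x,y_1)-r_\theta(x,y_2)$ (or the analogous quantity) for hinge.
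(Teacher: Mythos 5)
Your proposal is correct and follows essentially the same route as the paper's proof: both arguments reduce to observing that each loss is of the form $a(z)\,Z + b(z)$ in the feedback variable, so the expectation passes through by linearity (the paper writes this out as an explicit sum over the discrete support of $Z$ and regroups, which is the same computation). The only cosmetic differences are your use of margin $1$ where the appendix keeps a general margin $C$, and your explicit remark on aligning the $r_\theta$-based notation with the abstract signature $\ell(Z,z)$, which the paper handles implicitly.
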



The following proposition says that two ordinal feedback systems lead to the same population loss if both of them satisfy Assumption \ref{assm:wisdom_of_crowd} and the underlying loss function satisfies feedback affinity. 

\begin{proposition}
\label{prop:same_population_loss}
For any two ordinal feedback $Z$ and $Z^\prime$ both satisfying Assumption \ref{assm:wisdom_of_crowd}, if the loss function $\ell$ satisfies the affinity condition \eqref{eqn:loss_affine}, then for any hypothesis class $\mathcal{H}$, we have
\[
\mathbb{E}_{x, y, Z}\Big[\ell\big(Z, h(x, y_1, y_2)\big)\Big] = \mathbb{E}_{x, y, Z^\prime}\Big[\ell\big(Z^\prime, h(x, y_1, y_2)\big)\Big], \quad \forall h \in \mathcal{H}.
\]
\end{proposition}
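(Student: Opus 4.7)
The plan is to reduce both sides of the claimed equality to a single, common deterministic quantity depending only on the oracle preference $z_{\text{oracle}}$, by conditioning on $(x, y_1, y_2)$ and then invoking the affinity of $\ell$ together with Assumption \ref{assm:wisdom_of_crowd}. The guiding intuition is that $Z$ and $Z'$ are allowed to have very different distributions, but they share one property: their conditional means both equal $z_{\text{oracle}}$. The affinity condition is precisely the structural feature of $\ell$ which makes this shared first moment sufficient to pin down the expected loss.

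Concretely, I would fix an arbitrary $h \in \mathcal{H}$ and first apply the tower property of conditional expectation:
\[
\mathbb{E}_{x,y,Z}\!\left[\ell(Z, h(x,y_1,y_2))\right] = \mathbb{E}_{x,y}\!\left[\mathbb{E}_{Z}\!\left[\ell(Z, h(x,y_1,y_2)) \,\big|\, (x,y_1,y_2)\right]\right].
\]
Conditional on $(x, y_1, y_2)$, the quantity $h(x, y_1, y_2)$ is deterministic, so it can legitimately play the role of the second argument $z$ in Definition \ref{def:affine}. The affinity condition \eqref{eqn:loss_affine} then gives
\[
\mathbb{E}_{Z}\!\left[\ell(Z, h(x,y_1,y_2)) \,\big|\, (x,y_1,y_2)\right] = \ell\!\left(\mathbb{E}[Z \mid (x,y_1,y_2)],\, h(x,y_1,y_2)\right),
\]
and Assumption \ref{assm:wisdom_of_crowd} substitutes the inner conditional expectation by $z_{\text{oracle}}(x, y_1, y_2)$. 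Running the identical argument for $Z'$ shows that both the $Z$- and $Z'$-population losses collapse to the same expression
\[
\mathbb{E}_{x,y}\!\left[\ell\!\left(z_{\text{oracle}}(x, y_1, y_2),\, h(x, y_1, y_2)\right)\right],
\]
from which the claim follows.

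The only real subtlety, and the one point I would flag carefully in the write-up, is a domain question: the affinity condition in Definition \ref{def:affine} is stated for $z \in [0,1]$, whereas $h(x, y_1, y_2)$ as used in the learning objective \eqref{eqn:learning_objective} is a reward difference that need not lie in $[0,1]$. This is harmless since in both the cross-entropy and hinge cases treated by Proposition \ref{prop:loss_are_affine} the loss depends on $h$ only through a bounded transformation (e.g., $\sigma(r_\theta(x,y_1) - r_\theta(x,y_2))$); one simply applies the affinity condition with that transformed value in place of $h$ itself, or equivalently reads Definition \ref{def:affine} with the second argument ranging over the relevant codomain. With this minor bookkeeping, no further obstacle arises, which is consistent with the proposition's role as a clean conceptual statement: Assumption \ref{assm:wisdom_of_crowd} plus affinity is exactly what is needed to make the population-level loss an invariant of the feedback system.
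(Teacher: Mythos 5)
Your proposal is correct and follows essentially the same route as the paper's proof: condition on $(x, y_1, y_2)$, apply the affinity condition to pull the conditional expectation inside $\ell$, and use Assumption \ref{assm:wisdom_of_crowd} to identify both conditional means with $z_{\text{oracle}}$, so that both population losses reduce to $\mathbb{E}_{x,y}\big[\ell\big(z_{\text{oracle}}(x,y_1,y_2), h(x,y_1,y_2)\big)\big]$. Your remark on the domain of the second argument of $\ell$ is a reasonable piece of bookkeeping that the paper leaves implicit.
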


The function $h$ can be viewed as the reward model function that maps the prompt-responses sample $(x,y_1,y_2)$ into a preference prediction. The proposition implies that any feedback structure under Assumption \ref{assm:wisdom_of_crowd} leads to the same population loss, and thus justifies the assumption from a learning perspective. 

\subsection{Finite-sample benefits}

While the population loss establishes an equivalence between different feedback systems in an asymptotic sense, we illustrate the finite-sample benefits of a more fine-grained ordinal feedback system in the following. We first introduce the concepts of \textit{coupling} and \textit{hierarchical expectation}.


\begin{definition}[Coupling]
\label{def:coupling}
For any two random variables $\xi$ and $\xi^\prime$, if there exist two random variables $\zeta$ and $\zeta^\prime$ over one probability space such that $\zeta$ has the same distribution as $\xi$ and $\zeta^\prime$ the same as $\xi^\prime$, we call them a coupling of $\xi$ and $\xi^\prime$.
\end{definition}
\begin{definition}[Hierarchical Expectation]
\label{def:hierarchical_expectation}
For any two ordinal feedback systems $Z$ and $Z^\prime$ taking values in $\mathcal{Z}$ and $\mathcal{Z}^\prime$ over the same probability space $(\Omega, \mathcal{F}, \mathbb{P})$, if there exists a combination of random variables $(W, W^\prime)$ over a probability space $(\Omega_0, \mathcal{F}_0, \mathbb{P}_0)$ such that
\begin{enumerate}
    \item $(W, W^\prime)$ forms a coupling between $Z$ and $Z^\prime$;
    \item $W = \mathbb{E}[W^\prime | W]$ holds almost surely.
\end{enumerate}
Then we say that $Z$ is a hierarchical expectation of $Z^\prime$.
\end{definition}

The concept of hierarchical expectation defines the relative granularity of the feedback system. In general, if $Z$ is a hierarchical expectation of $Z^\prime$, then we say $Z$ is more fine-grained than $Z^\prime$ since $Z$ allows annotators to give more subtle responses. 
Such an intuition is further exemplified in the following proposition, the proof of which also echoes the universal existence result in Theorem \ref{thm:unbiased_interpolation}.

\begin{proposition}[Existence of Hierarchical Expectation]
\label{prop:existence_of_hierarchical_expectation}
For any two ordinal feedback systems $Z$ and $Z^\prime$ taking values in $\mathcal{Z}$ and $\mathcal{Z}^\prime$ over the same probability space $(\Omega, \mathcal{F}, \mathbb{P})$, suppose the marginal distribution of $Z$ is of measure $\mu = \sum_{z_i \in \mathcal{Z}} \alpha_i \delta_{z_i}$ and that of $Z^\prime$ is $\mu^\prime = \sum_{z_i^\prime \in \mathcal{Z}^\prime} \alpha_i^\prime \delta_{z_i^\prime}$, where $\delta(\cdot)$ is the Dirac delta distribution. If there exist real numbers $\beta_{j,k} \in [0, 1]$ such that
\begin{enumerate}
    \item $z_j$ is a convex combination of $z_k^\prime$'s with coefficients $\beta_{j,k}$'s. That is, \\
    $\sum_{k, z_k^\prime \in \mathcal{Z}^\prime} \beta_{j, k} = 1$ and $\sum_{k, z_k^\prime \in \mathcal{Z}^\prime} \beta_{j, k} z_k^\prime = z_j$ for any $z_j \in \mathcal{Z}$;
    \item $\sum_{j, z_j \in \mathcal{Z}} \beta_{j, k} \alpha_j = \alpha_k^\prime$ for any $z_k^\prime \in \mathcal{Z}^\prime$;
\end{enumerate}
Then $Z$ must be a hierarchical expectation of $Z^\prime$. On the other hand, if $Z$ is a hierarchical expectation of $Z^\prime$ according to coupling $(W, W^\prime)$ on $(\Omega_0, \mathcal{F}_0, \mathbb{P}_0)$, then there must exist real numbers $\beta_{j,k} \in [0, 1]$ satisfying the above requirements by setting $\beta_{j, k} = \mathbb{P}_0(W^{\prime} = z_k^\prime | W = z_j)$.
\end{proposition}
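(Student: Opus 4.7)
\textbf{Proof plan for Proposition~\ref{prop:existence_of_hierarchical_expectation}.} The proof is a two-way finite-dimensional construction: I would build a coupling from the coefficients $\beta_{j,k}$ in the forward direction, and extract coefficients from a given coupling in the reverse direction. For the forward direction, the plan is to define a candidate coupling $(W,W')$ directly on the finite product space $\mathcal{Z}\times\mathcal{Z}'$ by setting
\[
\mathbb{P}_0(W = z_j,\, W' = z_k') \coloneqq \alpha_j \, \beta_{j,k}.
\]
I would then verify in order: (i) this is a probability measure, since non-negativity is immediate and $\sum_{j,k}\alpha_j\beta_{j,k} = \sum_j\alpha_j\sum_k\beta_{j,k} = \sum_j\alpha_j = 1$ by the first half of condition~1; (ii) the $W$-marginal recovers $\mu$, since $\sum_k\alpha_j\beta_{j,k}=\alpha_j$; (iii) the $W'$-marginal recovers $\mu'$, which is precisely condition~2; and (iv) the martingale-type identity $W=\mathbb{E}[W'\mid W]$ holds almost surely, by computing
\[
\mathbb{E}[W' \mid W = z_j] \;=\; \sum_{k} z_k' \,\frac{\alpha_j \beta_{j,k}}{\alpha_j} \;=\; \sum_{k} \beta_{j,k}\, z_k' \;=\; z_j,
\]
where the last equality is the second half of condition~1. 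This establishes that $Z$ is a hierarchical expectation of $Z'$ per Definition~\ref{def:hierarchical_expectation}.

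For the reverse direction, I would take the given coupling $(W,W')$ on $(\Omega_0,\mathcal{F}_0,\mathbb{P}_0)$ and define $\beta_{j,k} \coloneqq \mathbb{P}_0(W' = z_k' \mid W = z_j)$ whenever $\alpha_j = \mathbb{P}_0(W=z_j) > 0$. Condition~1 then reduces to two tautologies: $\sum_k\beta_{j,k}=1$ by total probability, and $\sum_k\beta_{j,k}z_k' = \mathbb{E}[W'\mid W=z_j] = z_j$ by the defining property of the hierarchical expectation. Condition~2 follows from the chain rule and marginalization, $\sum_j \beta_{j,k}\alpha_j = \sum_j \mathbb{P}_0(W=z_j,W'=z_k') = \mathbb{P}_0(W'=z_k') = \alpha_k'$. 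For atoms with $\alpha_j=0$ the conditional probability is undefined, but those atoms may either be removed from $\mathcal{Z}$ without loss of generality, or $\beta_{j,k}$ may be set to any convex combination realizing $z_j$; such entries never contribute to condition~2 because they are multiplied by $\alpha_j=0$.

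The argument is essentially bookkeeping of finite sums, so there is no substantial technical obstacle. The only conceptual point worth emphasizing is that hierarchical expectation corresponds exactly to a one-step martingale transition kernel $\beta_{j,k}$ from the coarser atoms of $Z$ to the finer atoms of $Z'$: condition~1 is the martingale identity at each atom, and condition~2 is consistency with the marginal $\mu'$. This directly parallels the interpolation construction used in Theorem~\ref{thm:unbiased_interpolation}, where an oracle value is split across two grid points; here the same principle is applied atom-by-atom to relate two feedback systems.
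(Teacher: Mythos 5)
Your proposal is correct and follows essentially the same route as the paper: the paper likewise builds the coupling by giving $W$ the law of $Z$ and using $\beta_{j,k}$ as the conditional kernel $\mathbb{P}_0(W'=z_k'\mid W=z_j)$, which is exactly your joint measure $\alpha_j\beta_{j,k}$, and the converse is the same extraction of conditional probabilities. Your write-up is in fact slightly more careful than the paper's, since you explicitly verify the total mass and handle the $\alpha_j=0$ atoms, which the paper leaves implicit.
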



The following corollary gives some concrete examples (perhaps the most popular ones).

\begin{corollary}
\label{corol:hierarchical_expectation_examples}
Suppose $\mathcal{Z}$ and $\mathcal{Z}^\prime$ are two ordinal feedback sets satisfying Assumption \ref{assm:wisdom_of_crowd}. Then $Z$ is always a hierarchical expectation of $Z^\prime$ if any one of the following two condition holds:
\begin{enumerate}
    \item $Z$ is the oracle model such that $Z = z_{\text{oracle}}$;
    \item $Z^\prime$ is the binary feedback such that $\mathcal{Z}^\prime = \{0, 1\}$.
\end{enumerate}
\end{corollary}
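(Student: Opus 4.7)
The plan is to apply Proposition~\ref{prop:existence_of_hierarchical_expectation} in both cases by exhibiting explicit convex weights $\beta_{j,k}$. The crucial ingredient in both constructions is that, by Assumption~\ref{assm:wisdom_of_crowd}, both $Z$ and $Z^\prime$ share the conditional mean $z_{\text{oracle}}$, so the two systems of linear constraints in Proposition~\ref{prop:existence_of_hierarchical_expectation} can be matched. Throughout, everything should be read pointwise in $(x,y_1,y_2)$, and the global coupling is obtained by integrating the pointwise constructions against the distribution of the prompt--response tuple.

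For case~(1), since $Z=z_{\text{oracle}}$ is deterministic given $(x,y_1,y_2)$, the conditional marginal of $Z$ is a single Dirac mass at $z_{\text{oracle}}$, i.e.\ $\mathcal{Z}=\{z_{\text{oracle}}\}$ with $\alpha_1=1$. I would set $\beta_{1,k}:=\alpha_k^\prime=\mathbb{P}(Z^\prime=z_k^\prime\mid x,y_1,y_2)$ for each $z_k^\prime\in\mathcal{Z}^\prime$. Then condition~1 of Proposition~\ref{prop:existence_of_hierarchical_expectation} reduces to $\sum_k \alpha_k^\prime z_k^\prime = \mathbb{E}[Z^\prime\mid x,y_1,y_2] = z_{\text{oracle}}$, which is precisely Assumption~\ref{assm:wisdom_of_crowd} applied to $Z^\prime$; the second condition collapses to $\beta_{1,k}\cdot 1=\alpha_k^\prime$, which holds by definition.

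For case~(2), labelling the atoms $z_0^\prime=0$ and $z_1^\prime=1$, I would set $\beta_{j,1}:=z_j$ and $\beta_{j,0}:=1-z_j$ for each $z_j\in\mathcal{Z}$. Condition~1 and its normalization then read $\beta_{j,0}\cdot 0+\beta_{j,1}\cdot 1=z_j$ and $\beta_{j,0}+\beta_{j,1}=1$, both immediate. For condition~2, $\sum_j \beta_{j,1}\alpha_j=\sum_j z_j\alpha_j=\mathbb{E}[Z\mid x,y_1,y_2]=z_{\text{oracle}}=\mathbb{P}(Z^\prime=1\mid x,y_1,y_2)=\alpha_1^\prime$ by Assumption~\ref{assm:wisdom_of_crowd}, and the companion identity for $z_0^\prime=0$ follows by subtraction.

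The main thing to be careful about is the pointwise-in-$(x,y_1,y_2)$ nature of these constructions and that the almost-sure identity $W=\mathbb{E}[W^\prime\mid W]$ in Definition~\ref{def:hierarchical_expectation} survives when one glues the pointwise couplings into a joint distribution on $\Omega$. This is a routine tower-property check: $\mathbb{E}[W^\prime\mid W]=\mathbb{E}\bigl[\mathbb{E}[W^\prime\mid W,x,y_1,y_2]\bigm| W\bigr]=\mathbb{E}[W\mid W]=W$, using in case~(1) that $W$ pins down $z_{\text{oracle}}$ on its support and in case~(2) the Bernoulli$(W)$ construction of $W^\prime$. I expect this bookkeeping to be the fiddliest part of the write-up, but it is not a substantive obstacle.
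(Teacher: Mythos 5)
Your proposal is correct and follows essentially the same route as the paper's proof: both cases are handled by exhibiting the same explicit weights for Proposition~\ref{prop:existence_of_hierarchical_expectation} ($\beta_{1,k}=\alpha_k^\prime$ for the oracle case and $\beta_{j,\cdot}=(1-z_j,\,z_j)$ for the binary case) and invoking Assumption~\ref{assm:wisdom_of_crowd} to match the means. Your additional remark about gluing the pointwise couplings is a careful touch the paper leaves implicit, but it does not change the argument.
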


With these above definitions, we are ready to characterize the finite sample properties of different ordinal feedback systems through the lens of Rademacher complexity.


\begin{definition}[Rademacher Complexity]
Let $\mathcal{D}_{\mathcal{Z}} = \{(x_i, y_{i, 1}, y_{i, 2}, Z_i)\}_{i=1}^n$ be a dataset with $Z$ taking values in $\mathcal{Z}$. Consider a hypothesis class $\mathcal{H}$ of real-valued functions over $\mathcal{X} \times \mathcal{Y}^2$ and a loss function $\ell$. Then, the empirical Rademacher complexity is defined as
\[
\mathrm{Rad}_{\mathcal{D}_{\mathcal{Z}}}(\ell \circ \mathcal{H}) \coloneqq \frac{1}{n} \mathbb{E}_{\varepsilon}\left[\sup_{h \in \mathcal{H}} \sum_{i=1}^n \varepsilon_i \ell\big(Z_i, h(x_i, y_{i, 1}, y_{i, 2})\big)\right],
\]
where $\varepsilon_i$'s are independent random variables all taking values in $\{+1, -1\}$ with equal chances. By assuming that $(x_i, y_{i, 1}, y_{i, 2}, Z_i)$'s are i.i.d. and taking expectations over the entire distribution $\mathbb{P}$, we have the Rademacher complexity as
\[
\mathrm{Rad}_{\mathcal{Z}, n}(\ell \circ \mathcal{H}) \coloneqq \mathbb{E}_{\mathcal{D}_{\mathcal{Z}}\sim \mathbb{P}^n}\big[\mathrm{Rad}_{\mathcal{D}_{\mathcal{Z}}}(\ell \circ \mathcal{H})\big].
\]
\end{definition}

The following theorem says that a more fine-grained feedback system leads to a smaller Rademacher complexity for any hypothesis class $\mathcal{H}$, where setting $\mathcal{H} = \{r_\theta | \theta \in \Theta\}$ yields the results in RM learning. We include how the Rademacher complexity consequently affects the generalization bound to Appendix \ref{subapd:gen_bound} for completeness.

\begin{theorem}
\label{thm:Rademacher_complexity}
Suppose the loss function $\ell$ satisfies the affinity to feedback condition \eqref{eqn:loss_affine} and $\mathcal{H}$ is a hypothesis class of real-valued functions over $\mathcal{X} \times \mathcal{Y}^2$. For any two ordinal feedback systems $Z$ and $Z^\prime$ taking values in $\mathcal{Z}$ and $\mathcal{Z}^\prime$ such that $Z$ is a hierarchical expectation of $Z^\prime$, we have
\[
\mathrm{Rad}_{\mathcal{Z}, n}(\ell \circ \mathcal{H}) \leq \mathrm{Rad}_{\mathcal{Z}^\prime, n}(\ell \circ \mathcal{H}).
\]
\end{theorem}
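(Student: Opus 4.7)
The plan is to promote the per-instance hierarchical-expectation coupling to a joint coupling over the entire i.i.d.\ sample, use the affinity condition to rewrite the loss evaluated at the coarser feedback $Z$ as a conditional expectation of the loss evaluated at the finer feedback $Z'$, and then invoke the elementary inequality $\sup_h \mathbb{E}[X_h\mid\mathcal{G}] \le \mathbb{E}[\sup_h X_h\mid\mathcal{G}]$ to move the supremum past the conditional expectation. The rest is bookkeeping via the tower property.

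First, I would construct, independently across $i\in[n]$ and jointly with the feature triples $(x_i,y_{i,1},y_{i,2})$, a coupling $(W_i, W_i')$ guaranteed by Definition \ref{def:hierarchical_expectation} with $W_i\stackrel{d}{=} Z_i$, $W_i'\stackrel{d}{=} Z_i'$, and $W_i = \mathbb{E}[W_i' \mid W_i, x_i, y_{i,1}, y_{i,2}]$ almost surely. Because Rademacher complexity depends only on the marginal distribution of the dataset, both $\mathrm{Rad}_{\mathcal{Z},n}$ and $\mathrm{Rad}_{\mathcal{Z}',n}$ may be computed on this one joint probability space. Let $\mathcal{G}$ denote the $\sigma$-algebra generated by the features $\{(x_i,y_{i,1},y_{i,2})\}_{i=1}^n$, the coarser labels $\{W_i\}_{i=1}^n$, and the Rademacher signs $\varepsilon = (\varepsilon_1,\dots,\varepsilon_n)$. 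By independence across $i$ and independence of $\varepsilon$ from everything else, $\mathbb{E}[W_i'\mid\mathcal{G}] = W_i$. Then, for every fixed $h\in\mathcal{H}$, the affinity property \eqref{eqn:loss_affine} gives
\[
\ell\bigl(W_i, h(x_i,y_{i,1},y_{i,2})\bigr) = \ell\bigl(\mathbb{E}[W_i'\mid\mathcal{G}], h(x_i,y_{i,1},y_{i,2})\bigr) = \mathbb{E}\bigl[\ell\bigl(W_i', h(x_i,y_{i,1},y_{i,2})\bigr)\,\big|\,\mathcal{G}\bigr],
\]
so that, multiplying by $\varepsilon_i$ and summing,
\[
\sum_{i=1}^n \varepsilon_i\,\ell\bigl(W_i, h(x_i,y_{i,1},y_{i,2})\bigr) = \mathbb{E}\!\left[\sum_{i=1}^n \varepsilon_i\,\ell\bigl(W_i', h(x_i,y_{i,1},y_{i,2})\bigr)\,\bigg|\,\mathcal{G}\right].
\]
Taking $\sup_{h\in\mathcal{H}}$ on both sides and using the trivial bound $\mathbb{E}[Y_h\mid\mathcal{G}] \le \mathbb{E}[\sup_{h'} Y_{h'}\mid\mathcal{G}]$ for every $h$ produces the pointwise inequality
\[
\sup_{h\in\mathcal{H}}\sum_{i=1}^n \varepsilon_i\,\ell\bigl(W_i, h(x_i,y_{i,1},y_{i,2})\bigr) \;\le\; \mathbb{E}\!\left[\sup_{h\in\mathcal{H}}\sum_{i=1}^n \varepsilon_i\,\ell\bigl(W_i', h(x_i,y_{i,1},y_{i,2})\bigr)\,\bigg|\,\mathcal{G}\right].
\]
Taking total expectation, applying the tower property, and dividing by $n$ yields the desired inequality between the Rademacher complexities.

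The main obstacle I expect is the measure-theoretic bookkeeping of the coupling: Definition \ref{def:hierarchical_expectation} is stated for a single pair $(Z,Z')$, and one must argue that the independent per-sample lifts really produce a joint process on which the conditional-mean identity $\mathbb{E}[W_i'\mid\mathcal{G}] = W_i$ survives after enlarging the conditioning to include the other samples and the Rademacher signs. The key fact is that, conditionally on $(x_i,y_{i,1},y_{i,2},W_i)$, the variable $W_i'$ is independent of everything coming from $j\neq i$ and of $\varepsilon$, so the tower property collapses the extra conditioning back to the per-sample one. A secondary (and standard) technicality is the measurability of $\sup_{h\in\mathcal{H}}$ when $\mathcal{H}$ is uncountable, handled by the usual separability assumptions on the hypothesis class; this issue is orthogonal to the main argument and may be suppressed at the level of a sketch.
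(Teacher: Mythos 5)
Your proposal is correct and follows essentially the same route as the paper: the paper couples $(W,W')$, conditions on $W$ via the tower property, and applies Jensen's inequality to the convex map $\mu \mapsto \sup_{h}\sum_i \varepsilon_i \ell(\mu_i, h(\cdot))$ (convex as a supremum of affine functions), which is exactly your combination of the affinity identity $\ell(W_i,\cdot)=\mathbb{E}[\ell(W_i',\cdot)\mid\mathcal{G}]$ with the elementary bound $\sup_h \mathbb{E}[Y_h\mid\mathcal{G}]\le\mathbb{E}[\sup_h Y_h\mid\mathcal{G}]$. Your explicit treatment of lifting the per-sample coupling to the i.i.d.\ dataset and of the conditional independence from the other samples and the signs is a point of care the paper leaves implicit, but it does not change the argument.
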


As a consequence, the following corollary states that any feedback system that satisfies Assumption \ref{assm:wisdom_of_crowd} can be viewed as somewhere in the middle of the binary feedback system and the ideal oracle feedback system $z_{\text{oracle}}$. It is impossible to access the population preference model $z_{\text{oracle}}$ through human annotators in practice, but any ordinal feedback system provides more fine-grained information and leads us towards $z_{\text{oracle}}$.

\begin{corollary}[Ordinal feedback always better than binary]
\label{corol:ordinal_better_than_binary}
Suppose $Z$ is an ordinal feedback taking values in $\mathcal{Z}$ and $Z^\prime$ is a binary feedback in $\mathcal{Z}^\prime = \{0, 1\}$ over the same probability space. If they both satisfy Assumption \ref{assm:wisdom_of_crowd} and the loss function $\ell$ satisfies condition \eqref{eqn:loss_affine}, then
\[
\mathrm{Rad}_{\mathcal{Z}_{\text{oracle}}, n}(\ell \circ \mathcal{H}) \leq
\mathrm{Rad}_{\mathcal{Z}, n}(\ell \circ \mathcal{H}) \leq
\mathrm{Rad}_{\mathcal{Z}^\prime, n}(\ell \circ \mathcal{H}).
\]
\end{corollary}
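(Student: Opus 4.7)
The plan is to deduce both inequalities by chaining together two prior results: Corollary \ref{corol:hierarchical_expectation_examples} delivers the needed hierarchical expectation relations, and Theorem \ref{thm:Rademacher_complexity} converts each such relation into a Rademacher complexity inequality. So the proof is really just a bookkeeping exercise on top of the machinery already developed.

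First I would apply Corollary \ref{corol:hierarchical_expectation_examples}(1) to obtain that $z_{\text{oracle}}$ is a hierarchical expectation of $Z$. The witnessing coupling is the trivial one, $W = z_{\text{oracle}}$ and $W^\prime = Z$: since $z_{\text{oracle}}$ is a deterministic function of $(x, y_1, y_2)$, Assumption \ref{assm:wisdom_of_crowd} applied to $Z$ yields $\mathbb{E}[W^\prime \mid W] = \mathbb{E}[Z \mid z_{\text{oracle}}] = z_{\text{oracle}} = W$. Then I would apply Corollary \ref{corol:hierarchical_expectation_examples}(2) to obtain that $Z$ is a hierarchical expectation of the binary feedback $Z^\prime$. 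Here the coupling is $W = Z$ with $W^\prime$ drawn from $\mathrm{Bernoulli}(W)$; by construction $\mathbb{E}[W^\prime \mid W] = W$, and the marginal of $W^\prime$ matches that of $Z^\prime$ because Assumption \ref{assm:wisdom_of_crowd} pins down a binary feedback distribution uniquely through its mean $z_{\text{oracle}}$.

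With both hierarchical expectation relations established and the affinity condition on $\ell$ given by hypothesis, I would invoke Theorem \ref{thm:Rademacher_complexity} twice. The first invocation yields $\mathrm{Rad}_{\mathcal{Z}_{\text{oracle}}, n}(\ell \circ \mathcal{H}) \leq \mathrm{Rad}_{\mathcal{Z}, n}(\ell \circ \mathcal{H})$, and the second yields $\mathrm{Rad}_{\mathcal{Z}, n}(\ell \circ \mathcal{H}) \leq \mathrm{Rad}_{\mathcal{Z}^\prime, n}(\ell \circ \mathcal{H})$. Chaining the two gives the displayed conclusion.

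There is no real obstacle here; the one place demanding care is the coupling step, where one must verify not only that $W^\prime$ has support inside $\mathcal{Z}^\prime$ (resp.\ $\mathcal{Z}$) but also that its marginal actually equals the marginal of $Z^\prime$ (resp.\ $Z$). In both cases above this follows because Assumption \ref{assm:wisdom_of_crowd} plus the support constraint leaves only one admissible distribution: a point mass for $\mathcal{Z}_{\text{oracle}}$ and a Bernoulli with mean $z_{\text{oracle}}$ for $\mathcal{Z}^\prime = \{0,1\}$. Once this is noted, the corollary reduces to two applications of Theorem \ref{thm:Rademacher_complexity}.
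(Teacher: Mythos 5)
Your proposal is correct and matches the paper's proof, which simply cites Corollary \ref{corol:hierarchical_expectation_examples} for the two hierarchical expectation relations and applies Theorem \ref{thm:Rademacher_complexity} twice. The extra detail you supply about the witnessing couplings is consistent with the paper's proof of Corollary \ref{corol:hierarchical_expectation_examples} and does not change the route.
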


\subsection{Generalizations to DPO}
\label{subsec:extensions_DPO}

Our results naturally extend to the direct policy optimization (DPO) \citep{rafailov2024direct} case. We first give a quick introduction to the DPO training objective. Suppose the ground-truth reward function for any prompt-response pair $(x, y)$ is $r^*(x, y)$. Then under the reinforcement learning objective of maximizing the reward (with a Kullback-Leibler divergence regularization of strength $\beta$ from the original policy $\pi_{\text{ref}}$), the optimal policy under the ground truth reward function $r^*$ should be
\[
\pi^*(y|x) \propto \pi_{\text{ref}}(y|x) \cdot \exp\left(\frac1\beta r^*(x, y)\right).
\]
Under the Bradley-Terry model, we have
\[
\mathbb{P}(y_1 \succ y_2 | x) = \sigma\left(\beta \log \frac{\pi^*(y_1|x)}{\pi_{\text{ref}}(y_1|x)} - \beta \log \frac{\pi^*(y_2|x)}{\pi_{\text{ref}}(y_2|x)}\right),
\]
where the $\sigma(\cdot)$ is the sigmoid function $\sigma(x) = \exp(x)/(1+\exp(x))$. Then the DPO training objective is to minimize the cross-entropy loss under the binary feedback setting
\begin{align}
\min_{\theta} \sum_{i=1}^n & -Z_i \cdot \log\left(\sigma\left(\beta \log \frac{\pi_\theta(y_{i,1}|x_i)}{\pi_{\text{ref}}(y_{i,1}|x_i)} - \beta \log \frac{\pi_\theta(y_{i,2}|x_i)}{\pi_{\text{ref}}(y_{i,2}|x_i)}\right)\right) \nonumber \\
& - (1-Z_i) \cdot \log\left(\sigma\left(\beta \log \frac{\pi_\theta(y_{i,2}|x_i)}{\pi_{\text{ref}}(y_{i,2}|x_i)} - \beta \log \frac{\pi_\theta(y_{i,1}|x_i)}{\pi_{\text{ref}}(y_{i,1}|x_i)}\right)\right).
\label{eqn:learning_objective_DPO}
\end{align}
By considering a richer feedback than the binary case $\mathcal{Z} = \{0, 1\}$, we can use \eqref{eqn:learning_objective_DPO} to train the LLM $\pi_{\theta}$ directly under the ordinal feedback. The affinity condition \eqref{eqn:loss_affine} is fulfilled since the loss is still the cross-entropy loss. Thus, applying Theorem \ref{thm:Rademacher_complexity} for $\Pi = \{\pi_\theta, \theta \in \Theta\}$ yields a similar result. We present here without repeating the proof.
\begin{corollary}
Consider the corresponding loss function $\ell$ in the training objective \eqref{eqn:learning_objective_DPO} and a policy class $\Pi = \{\pi_\theta, \theta \in \Theta\}$. For any two ordinal feedback systems $Z$ and $Z^\prime$ taking values in $\mathcal{Z}$ and $\mathcal{Z}^\prime$ such that $Z$ is a hierarchical expectation of $Z^\prime$, we have
\[
\mathrm{Rad}_{\mathcal{Z}, n}(\ell \circ \Pi) \leq \mathrm{Rad}_{\mathcal{Z}^\prime, n}(\ell \circ \Pi).
\]
\end{corollary}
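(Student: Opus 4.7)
The plan is to reduce the DPO case to the reward-modeling case already handled in Theorem \ref{thm:Rademacher_complexity} by recognizing that the DPO objective is literally the cross-entropy RM loss applied to a specific parametrization of the reward difference. Concretely, for each policy $\pi_\theta \in \Pi$, define the induced implicit reward-difference function
\[
h_\theta(x,y_1,y_2) \coloneqq \beta \log \frac{\pi_\theta(y_1\mid x)}{\pi_{\text{ref}}(y_1\mid x)} - \beta \log \frac{\pi_\theta(y_2\mid x)}{\pi_{\text{ref}}(y_2\mid x)},
\]
and let $\mathcal{H}_\Pi \coloneqq \{h_\theta : \theta \in \Theta\}$ be the resulting hypothesis class of real-valued functions on $\mathcal{X}\times \mathcal{Y}^2$. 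Then the per-sample DPO loss in \eqref{eqn:learning_objective_DPO} is exactly $\ell_{\mathrm{ce}}(Z, h_\theta(x,y_1,y_2))$, i.e.\ the very same cross-entropy loss that drives \eqref{eqn:learning_objective}.

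First, I would verify that the Rademacher complexity transfers faithfully along this reparametrization: since the DPO loss on any sample depends on $\theta$ only through $h_\theta$, taking the supremum over $\pi_\theta\in\Pi$ inside the Rademacher functional is identical to taking the supremum over $h\in\mathcal{H}_\Pi$, giving $\mathrm{Rad}_{\mathcal{Z},n}(\ell\circ\Pi)=\mathrm{Rad}_{\mathcal{Z},n}(\ell_{\mathrm{ce}}\circ\mathcal{H}_\Pi)$ and likewise for $\mathcal{Z}^\prime$. Second, I would invoke Proposition \ref{prop:loss_are_affine} to conclude that $\ell_{\mathrm{ce}}$ satisfies the affinity condition \eqref{eqn:loss_affine}, so that the hypotheses of Theorem \ref{thm:Rademacher_complexity} are met for $\mathcal{H}_\Pi$. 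Third, applying Theorem \ref{thm:Rademacher_complexity} to the pair $(\mathcal{Z}, \mathcal{Z}^\prime)$ under the assumption that $Z$ is a hierarchical expectation of $Z^\prime$ directly yields
\[
\mathrm{Rad}_{\mathcal{Z},n}(\ell\circ\Pi) = \mathrm{Rad}_{\mathcal{Z},n}(\ell_{\mathrm{ce}}\circ\mathcal{H}_\Pi) \leq \mathrm{Rad}_{\mathcal{Z}^\prime,n}(\ell_{\mathrm{ce}}\circ\mathcal{H}_\Pi) = \mathrm{Rad}_{\mathcal{Z}^\prime,n}(\ell\circ\Pi).
\]

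The main obstacle, if one wants to call it that, is purely bookkeeping: confirming that the mapping $\pi_\theta \mapsto h_\theta$ is well-defined once $\pi_{\text{ref}}$ is fixed and that no measurability or integrability issue is introduced by the logarithmic reparametrization (which is immediate under the usual assumption that $\pi_\theta$ and $\pi_{\text{ref}}$ share a common support). Beyond this, there is no new coupling argument or probabilistic work to carry out: the hierarchical-expectation structure that powered Theorem \ref{thm:Rademacher_complexity} is inherited without modification, precisely because it lives on the feedback variable $Z$ and is entirely orthogonal to how the hypothesis class on $\mathcal{X}\times\mathcal{Y}^2$ is parametrized.
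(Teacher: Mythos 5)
Your proposal is correct and matches the paper's own (very brief) argument: the paper likewise notes that the DPO loss is still the cross-entropy loss, hence satisfies the affinity condition \eqref{eqn:loss_affine}, and then applies Theorem \ref{thm:Rademacher_complexity} with the policy class playing the role of the hypothesis class. Your explicit reparametrization $\pi_\theta \mapsto h_\theta$ is just a more careful spelling-out of the same reduction.
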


\subsection{Implications on soft labeling}
\label{subsec:connect_soft_label}

As noted earlier, the results developed above also have implications on the technique of soft labeling, which we elaborate on in this subsection. Specifically, we show how the analysis can be applied to the context soft labeling and induces a novel bias-variance trade-off for knowledge distillation.

For general $k$-nary classification problems, the standard feedback (labeling of the target variable) is a $k$-dimensional one-hot vector. However, these all-zero-but-one labels make the model overfit easily from a training perspective. Knowledge distillation \citep{hinton2015distilling} is a well-known technique first developed in computer vision to regularize the model from fitting the noises. The original data is used to train a teacher model of which the predictions are named \textit{soft labels}. Then a student model is trained to mimic the predictions of the teacher model, that is, minimizing the training loss against the soft labels generated by the teacher model rather than the original ones. Such a distillation regularizes the student model from overfitting and overconfidence.

Existing theoretical works \citep{phuong2019towards, zhou2021rethinking} are developed to understand the benefits of knowledge distillation and soft labeling. Our theoretical perspective in the preceding subsection provides a new perspective on that problem. In a word, the trained teacher model could be viewed as one (possibly biased) oracle feedback, and learning from the oracle eases the overfitting via reducing the labeling variance (and hence a smaller Rademacher complexity). More concretely, consider the following four learning paradigms:
\begin{enumerate}
\item Oracle/ideal: learn from oracle labeled samples (denoted as $(x, y_{\text{oracle}})$'s in this subsection). The oracle $y_{\text{oracle}}$ is the conditional expectation of the label $y$.
\item Original: learn from the original samples (denoted as $(x, y)$'s in this subsection), where we regard $y$ as a randomly sampled label according to $y \sim y_{\text{oracle}}$.
\item Knowledge distillation: learn from the teacher model $\mathcal{T}$'s output (denoted as $(x, \bar{y}_{\mathcal{T}})$'s in this subsection). The labels $\bar{y}_{\mathcal{T}}$'s are random vectors of which the randomness comes from the teacher model $\mathcal{T}$.
\item Sampling from teacher: learn from the teacher model, but not directly from $\bar{y}_{\mathcal{T}}$; instead, we use a sampled label $y_{\mathcal{T}} \sim \bar{y}_{\mathcal{T}}$. This introduces more randomness in the labeling process.
\end{enumerate}

What is the ``bias-variance'' tradeoff in the learning paradigm (c)? First, $y_{\text{oracle}}$ shares the same conditional expectation with $y$, and so do $\bar{y}_{\mathcal{T}}$ and $y_{\mathcal{T}}$, where their population cross-entropy losses are the same due to the affinity condition \eqref{eqn:loss_affine}. The ``bias'' comes from $\bar{y}_{\mathcal{T}}$ as an imperfect estimation of $y_{\text{oracle}}$, leading to different population losses, while training according to (c) or (d) introduces an additional loss in the original population loss. That is the ``bias'' term
\[
\text{Bias} \coloneqq \mathbb{E}_{\mathcal{T}}\Big[\mathbb{E}_{x, y^\prime}\big[\ell(y^\prime, h^*_{\mathcal{T}}(x))\big] - \mathbb{E}_{x, y^\prime}\big[\ell(y^\prime, h^*(x))\big] \Big] \geq 0,
\]
where we set $y^\prime$ to be i.i.d. as $y$ to prevent the dependence of $\mathcal{T}$ on $y$, and
\[
h^*_{\mathcal{T}} \coloneqq \argmin_{h \in \mathcal{H}} \mathbb{E}_{x, \bar{y}_{\mathcal{T}}}\big[\ell(\bar{y}_{\mathcal{T}}, h(x))\big], \quad h^* \coloneqq \argmin_{h \in \mathcal{H}} \mathbb{E}_{x, y}\big[\ell(y, h(x))\big],
\]
are the corresponding hypotheses minimizing the population losses.

As for the variance, we can directly see from the construction that $\bar{y}_{\mathcal{T}}$ (or $y_{\text{oracle}}$) is a hierarchical expectation of $y_{\mathcal{T}}$ (or $y$), thus learning paradigm (c) (or (a)) has a lower Rademacher complexity compared to that of (d) (or (b)):
\[
\mathrm{Rad}_{y_{\mathcal{T}}}(\ell \circ \mathcal{H}) - \mathrm{Rad}_{\bar{y}_{\mathcal{T}}}(\ell \circ \mathcal{H}) \geq 0 \quad \text{for any }\mathcal{T}, \quad
\mathrm{Rad}_{y}(\ell \circ \mathcal{H}) - \mathrm{Rad}_{y_{\text{oracle}}}(\ell \circ \mathcal{H}) \geq 0,
\]
where the subscript of the Rademacher complexity denotes the source distribution of the label. However, such an argument does not directly compare the Rademacher complexity of the original learning paradigm ($\mathrm{Rad}_{y}$) with that of the knowledge distillation ($\mathrm{Rad}_{\bar{y}_{\mathcal{T}}}$). To explicitly show that the ``variance'' is reduced, we make the following assumption that the marginal output of the teacher model is unbiased:
\begin{assumption}[Soft-labeling version of Assumption \ref{assm:wisdom_of_crowd}]
\label{assm:marginal_unbiased_teacher}
We assume that the teacher model is marginally unbiased. That is,
\[
\mathbb{E}_{\mathcal{T}}[\bar{y}_{\mathcal{T}} | x] = y_{\text{oracle}}(x).
\]
Here $\mathcal{T}$ is the teacher model trained from original labels $y$'s. The randomness of $\mathcal{T}$ comes from the randomness of $y$'s and the training procedure (e.g. random seeds).
\end{assumption}
In general, the learned teacher model outputs $\bar{y}_{\mathcal{T}}$'s are biased estimations of $y_{\text{oracle}}$'s, and the biases are towards the original labels $y$'s (in the extreme case where $\mathcal{T}$ interpolates all the labels, we have $\bar{y}_{\mathcal{T}} = y$). We note that the assumption only requires an unbiasedness in a marginal sense, that those biases cancel out each other in the marginal sense. We have the following with the assumption (as an analogy of Assumption \ref{assm:wisdom_of_crowd}).
\begin{theorem}
\label{thm:soft_label_benefits}
Under Assumption \ref{assm:marginal_unbiased_teacher} and the cross-entropy loss, we have
\[
\text{Reduced variance} \coloneqq \mathrm{Rad}_{y}(\ell \circ \mathcal{H}) - \mathrm{Rad}_{\bar{y}_{\mathcal{T}}}(\ell \circ \mathcal{H}) \geq 0.
\]
\end{theorem}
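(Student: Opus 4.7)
The plan is to exhibit $\bar{y}_{\mathcal{T}}$ as a hierarchical expectation of $y$ in the sense of Definition \ref{def:hierarchical_expectation}, so that the proof of Theorem \ref{thm:Rademacher_complexity} applies essentially verbatim. I would first construct the coupling on an auxiliary probability space as follows: take $W$ to have the marginal law of $\bar{y}_{\mathcal{T}}$ (averaging over both the input $x$ and the teacher-training randomness $\mathcal{T}$), and, conditional on $W=w$, sample $W'$ from the categorical distribution with parameter $w$, i.e., a one-hot vector that equals $e_k$ with probability $w_k$. By construction $\mathbb{E}[W'\mid W]=W$ almost surely, which is exactly the hierarchical-expectation condition.

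Next, I would verify that $W'$ has the same marginal law as $y$. Conditional on $x$, the categorical parameter integrated over the teacher randomness is $\mathbb{E}_{\mathcal{T}}[\bar{y}_{\mathcal{T}}\mid x]$, which by Assumption \ref{assm:marginal_unbiased_teacher} equals $y_{\text{oracle}}(x)$; this is precisely the distribution of the original label $y$ given $x$. Therefore $(W,W')$ is a valid coupling between $\bar{y}_{\mathcal{T}}$ and $y$, and we conclude that $\bar{y}_{\mathcal{T}}$ is a hierarchical expectation of $y$.

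Third, I would run the Rademacher argument that drives Theorem \ref{thm:Rademacher_complexity}. The cross-entropy loss $\ell(y,h(x))=-\sum_k y_k\log h(x)_k$ is linear in each coordinate of the label vector, so the analog of the affinity condition in Definition \ref{def:affine} holds, giving $\ell(W_i,h(x_i))=\mathbb{E}[\ell(W'_i,h(x_i))\mid W_i]$ almost surely. Plugging this identity into the definition of $\mathrm{Rad}_{\bar{y}_{\mathcal{T}}}$, pulling the supremum inside the inner conditional expectation via Jensen's inequality, and collapsing the iterated expectation by the tower property yields
\[
\mathrm{Rad}_{\bar{y}_{\mathcal{T}}}(\ell\circ\mathcal{H}) \;\le\; \tfrac{1}{n}\,\mathbb{E}_{\varepsilon}\,\mathbb{E}_{W_{1:n},W'_{1:n}}\sup_{h\in\mathcal{H}}\sum_{i=1}^n \varepsilon_i\,\ell(W'_i,h(x_i)) \;=\; \mathrm{Rad}_{y}(\ell\circ\mathcal{H}),
\]
which is the inequality we want.

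The main technical obstacle is bookkeeping: the framework of Section \ref{sec:benefits_of_ord} is phrased for scalar ordinal feedback in $[0,1]$, whereas labels here live in the probability simplex. I would handle this with a short preparatory remark noting that the definitions of coupling, hierarchical expectation, and feedback affinity extend coordinatewise to vector-valued feedback, and the proof of Theorem \ref{thm:Rademacher_complexity} carries over unchanged because cross-entropy is linear in each coordinate of the label. Assumption \ref{assm:marginal_unbiased_teacher} is the exact vector analog of Assumption \ref{assm:wisdom_of_crowd}, so once this extension is in place the three steps above go through without further difficulty.
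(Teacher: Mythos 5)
Your proposal is correct and follows essentially the same route as the paper: you construct the identical coupling ($W$ distributed as $\bar{y}_{\mathcal{T}}$, and $W'$ a one-hot categorical draw with parameter $W$), verify via the tower property and Assumption \ref{assm:marginal_unbiased_teacher} that $W'$ matches the law of $y$, and then invoke the Jensen/affinity argument of Theorem \ref{thm:Rademacher_complexity}. Your explicit remark that the scalar ordinal-feedback framework must be extended coordinatewise to simplex-valued labels is a point the paper's proof passes over silently, but it does not change the argument.
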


In this light, the notion of hierarchical expectation and the reduced Rademacher complexity in Theorem \ref{thm:soft_label_benefits} render a new bias-variance tradeoff for the knowledge distillation methods. Compared to that of \citet{zhou2021rethinking}, our approach theoretically shows the variance is always reduced by introducing the soft labels, while \citet{zhou2021rethinking} makes the reduction an assumption and verifies it empirically.

\section{Numerical Experiments}

\label{sec:experiments}

We perform numerical experiments to answer two questions: (i) How do different granularities of the feedback model affect the learning of the reward model? (ii) Does the inclusion of these ordinal feedback data with the objective \eqref{eqn:learning_objective} benefit the learning of the reward model?


\subsection{Experiment Settings}

\label{subsec:experiment_setup}
\textbf{Datasets.} In the following numerical experiments, we leverage the Skywork-Reward-Preference-80K-v0.2 dataset \citep{liu2024skywork} as our base training dataset. We perform multiple runs and report the average performance (along with the confidence intervals); for each run, we randomly sample a 1024-sized subset as the hold-out evaluation dataset. In addition, we use the RewardBench dataset \citep{lambert2024rewardbench} for the out-of-distribution evaluation task to comprehensively assess the performance of different trained models. More details can be found in Appendix \ref{sec:dataset_details}.

\textbf{Base Models.} Our base models for the following experiments are llama-3.2-1b-instruct \citep{dubey2024llama} and gemma-2-2b-it \citep{team2024gemma}. Both models are trained under full-parameter fine-tuning. For the training parameters and other details, we refer to Appendix \ref{sec:training_details}.

\textbf{Ordinal Feedback.} The original Skywork-Reward-Preference-80K-v0.2 dataset only contains a binary feedback for each prompt $x$ and a response pair $y_1$ and $y_2$. To generate feedback labels with different levels of granularities, we adopt a well-trained reward model, Skywork-Reward-Gemma-2-27B-v0.2 \citep{liu2024skywork}, as the oracle scoring model $r_{\text{oracle}}: \mathcal{X} \times \mathcal{Y} \rightarrow \mathbb{R}$ in this case. We chose this model because (1) it was exclusively trained on the to-be-labeled base training dataset hence there is hardly a risk of out-of-distribution mislabeling; (2) the model ranks first on the RewardBench online leaderboard up to the time of this paper, making its output oracle scores more reliable. Accordingly, the induced oracle model being $z_{\text{oracle}}(x, y_1, y_2) = \sigma((r_{\text{oracle}}(x, y_1) - r_{\text{oracle}}(x, y_2))/T)$ where $T$ is a temperature parameter and $\sigma(\cdot)$ denotes the sigmoid function. 

We consider the following four types of feedback systems: 
\begin{itemize}
    \item Oracle: $z_{\text{oracle}}$ is directly used as the feedback label and $\mathcal{Z}_{\text{oracle}} \subset [0, 1]$.
    \item Binary: the label is sampled by $Z_{\text{binary}} \sim \mathrm{Bernoulli}(z_{\text{oracle}})$ and $\mathcal{Z}_{2} = \{0, 1\}$.
    \item 3-level: the label is sampled as the process in Theorem \ref{thm:unbiased_interpolation} considering only the smallest interval containing $z_{\text{oracle}}$ and $\mathcal{Z}_{3} = \{0, 0.5, 1\}$.
    \item 5-level: the label is sampled as the process in Theorem \ref{thm:unbiased_interpolation} considering only the smallest interval containing $z_{\text{oracle}}$ and $\mathcal{Z}_{5} = \{0, 0.2, 0.5, 0.8, 1\}$.
\end{itemize}

We provide a label histogram in Appendix \ref{sec:dataset_details}. We adopt the objective function \eqref{eqn:learning_objective} to train the reward model.

\subsection{Fine-grained feedback leads to better reward learning}
\label{subsec:HE_experiment}

As discussed earlier, a more fine-grained feedback system should intuitively and theoretically lead to better reward learning. For the four feedback models listed above, they should have the following orders in terms of performance:
$$\text{Oracle} \ge \text{5-level} \ge \text{3-level}  \ge \text{Binary}$$
where $\ge$ represents an advantage in model performance.

Here we perform numerical experiments to verify such intuitions and for each combination of the reward model and feedback system, we conduct 5 independent training runs and report the average results.  For the setting of learning with oracle feedback, we set 
$$Z_i=\mathbb{P}\left(y_{i,1} \succ y_{i,2} | x_i\right) = z_{\text{oracle}}(x_i, y_{i,1}, y_{i,2})$$
in the learning objective \eqref{eqn:learning_objective}. For more fine-grained ordinal feedback, we sample the feedback according to Section \ref{subsec:experiment_setup}.

Table \ref{tab:HE_experiment_results} and Figure \ref{fig:HE_experiment_results} summarize the experiment results which are aligned with the findings in the previous sections. Three take-away messages are: First, a more fine-grained feedback structure leads to better reward learning for both in-distribution (ID) and out-of-distribution (OOD) performance. Second, though we do not have access to the oracle model in practice, the 5-level feedback system provides a good proxy for that. Third, the learning objective \eqref{eqn:learning_objective}, as a generalization of the canonical cross-entropy loss for binary feedback, is an effective one to handle the ordinal feedback data. 


\begin{table}[ht!]
\centering
\begin{tabular}{cccccccc}
\toprule
\multirow{2}{*}{\text{Model}} & \multirow{2}{*}{\text{Feedback}}& \multicolumn{2}{c}{\text{Oracle CE Loss}} & \multicolumn{2}{c}{\text{ID Accuracy}} & \multicolumn{2}{c}{\text{OOD Accuracy}} \\
\cmidrule(r){3-4} \cmidrule(r){5-6} \cmidrule(r){7-8}
 && Mean & Std & Mean & Std & Mean & Std \\
\midrule
\multirow{4}{*}{\text{Llama}}&Oracle & 0.5711 & 0.0020 & 0.9382 & 0.0037 & 0.8193 & 0.0016 \\
&5-level & 0.5714 & 0.0019 & 0.9372 & 0.0040 & 0.8100 & 0.0013 \\
&3-level & 0.5715 & 0.0021 & 0.9359 & 0.0044 & 0.8016 & 0.0034 \\
&Binary & 0.5736 & 0.0024 & 0.9329 & 0.0044 & 0.7667 & 0.0012 \\
\midrule
\multirow{4}{*}{\text{Gemma}}&Oracle & 0.5698 & 0.0018 & 0.9401 & 0.0031 & 0.8697 & 0.0072 \\
&5-level & 0.5704 & 0.0016 & 0.9371 & 0.0082 & 0.8584 & 0.0107 \\
&3-level & 0.5704 & 0.0018 & 0.9381 & 0.0083 & 0.8580 & 0.0016 \\
&Binary & 0.5709 & 0.0021 & 0.9368 & 0.0074 & 0.8237 & 0.0101 \\
\bottomrule
\end{tabular}
\caption{\small Model convergence statistics under different feedback models. ID stands for in-distribution. OOD stands for out-of-distribution. The ID and OOD datasets are in Section \ref{subsec:experiment_setup}. The oracle CE loss is computed by adopting $z_{\text{oracle}}$ as $Z_i$ regardless of the feedback type.} 
\label{tab:HE_experiment_results}
\end{table}

\begin{figure}[ht!]
\centering
\begin{subfigure}{0.48\textwidth}
    \centering
    \includegraphics[width=\linewidth]{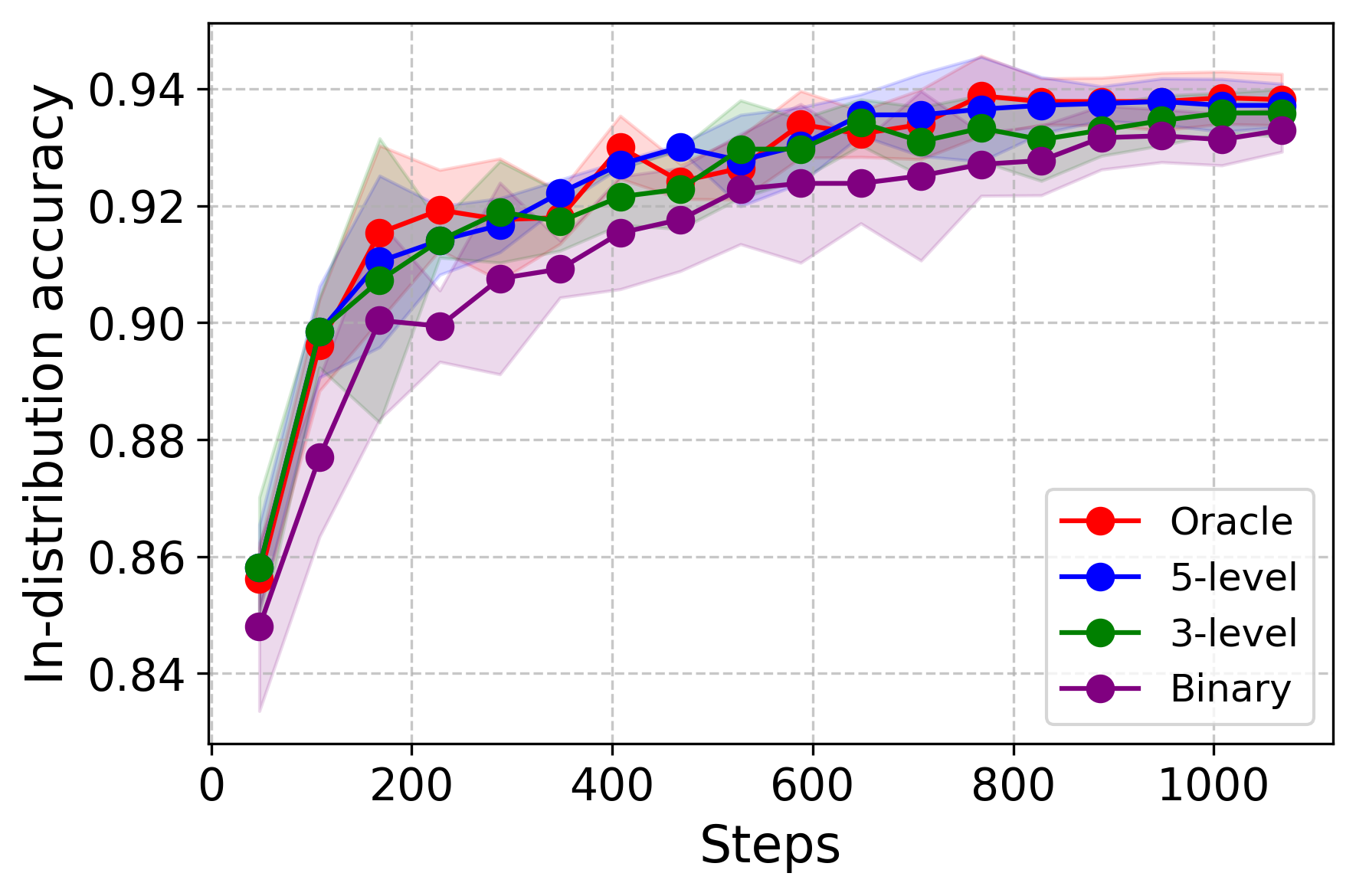}
    \caption{In-distribution (ID) accuracy/llama models}
    \label{fig:HE_experiment_results_llama_acc_finer_feedback_better}
\end{subfigure}
\hfill
\begin{subfigure}{0.48\textwidth}
    \centering
    \includegraphics[width=\linewidth]{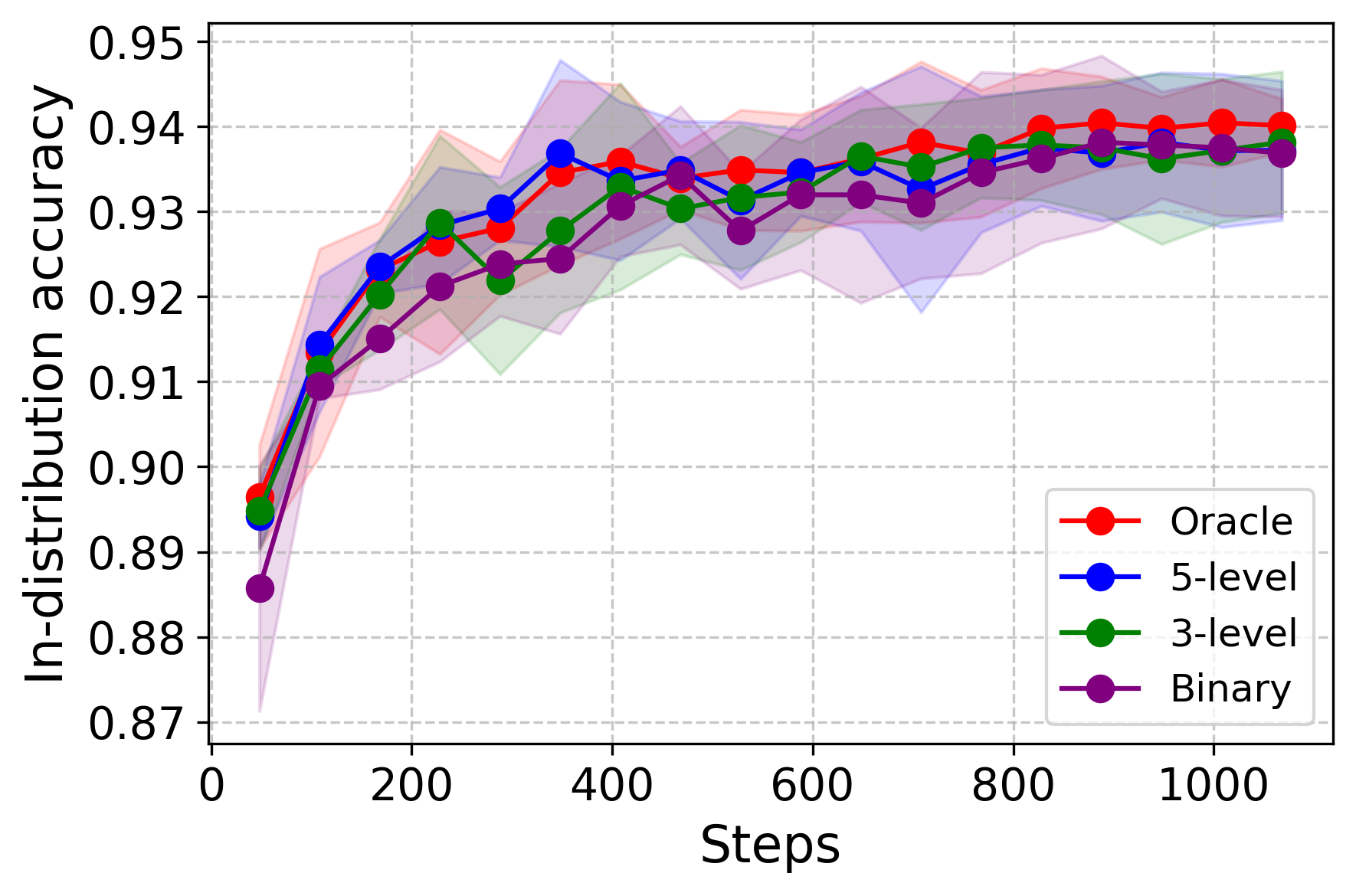}
    \caption{In-distribution (ID) accuracy/gemma models}
    \label{fig:HE_experiment_results_gemma_acc_finer_feedback_better}
\end{subfigure}

\vspace{0.5cm} 

\begin{subfigure}{0.48\textwidth}
    \centering
    \includegraphics[width=\linewidth]{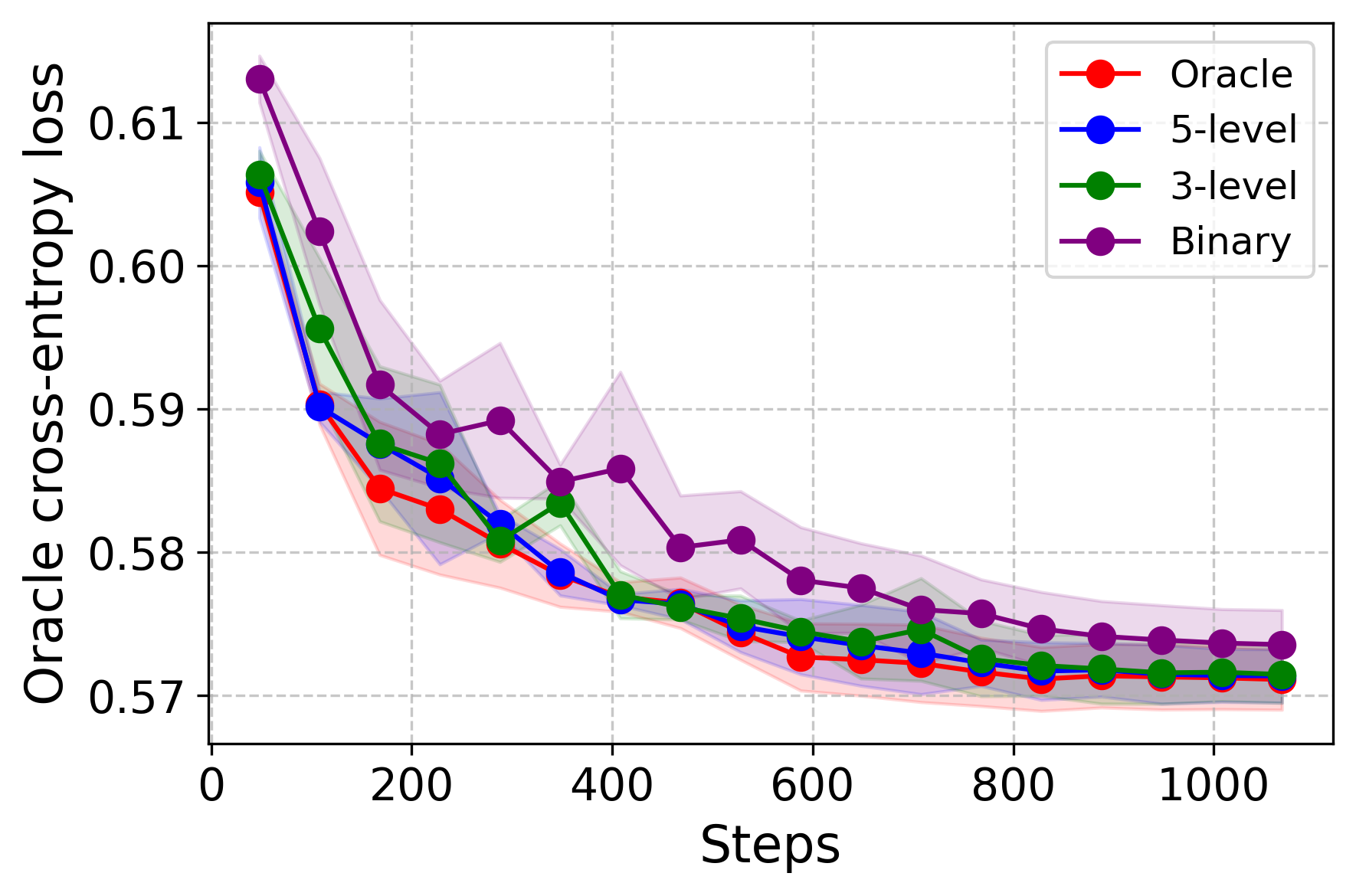}
    \caption{Oracle CE loss/llama models}
    \label{fig:HE_experiment_results_llama_loss_finer_feedback_better}
\end{subfigure}
\hfill
\begin{subfigure}{0.48\textwidth}
    \centering
    \includegraphics[width=\linewidth]{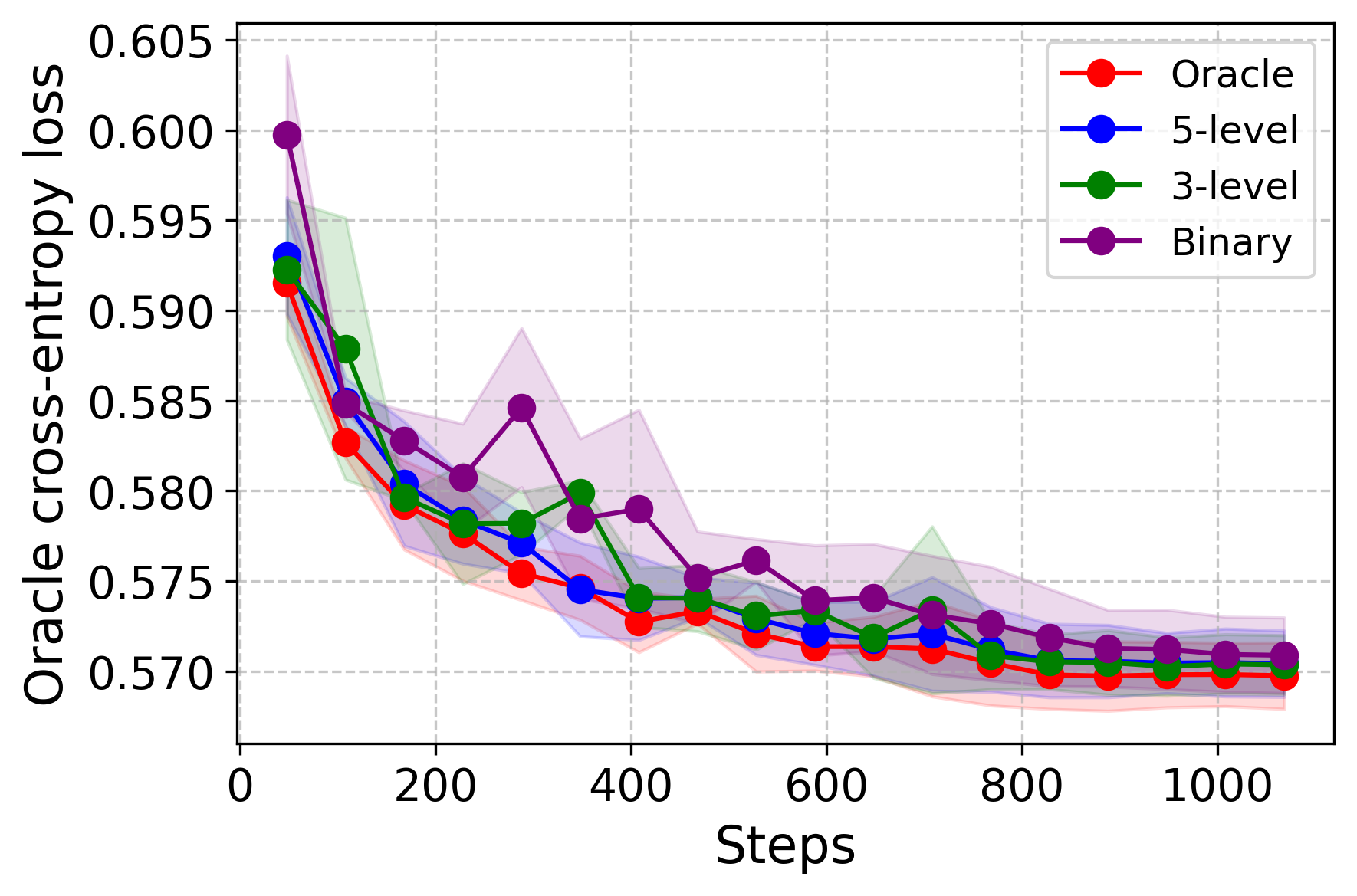}
    \caption{Oracle CE loss/gemma models}
    \label{fig:HE_experiment_results_gemma_loss_finer_feedback_better}
\end{subfigure}

\caption{The evaluation dynamics of llama and gemma models for different ordinal feedback labels.}
\label{fig:HE_experiment_results}
\end{figure}

\subsection{Ordinal feedback v.s. binary feedback}
\label{subsec:POS_experiment}

Now we restrict our attention to the 3-level feedback setting and investigate the effect of the proportion of the tied data (samples with labels of $y_1$ ``same as'' $y_2$). Specifically, we limit the training samples to 32,768 and consider 5 different proportions of the tied data:
\begin{itemize}
\item 0\%-tied: All the data samples are binary-labeled. 
\item 25\%, 50\%, 75\% of the data samples are tied.
\item 100\%-tied: All the data samples are tied. 
\end{itemize}
More details of how the tied data samples are generated are deferred to \ref{sec:dataset_details}. 

Table \ref{tab:pos_experiment_results} and Figure \ref{fig:pos_experiment_results} summarize the experiment results. We make the following observations. First, the 100\%-tied setting fails in that it results in a significantly worse performance than the other settings. This is natural as it leads to a reward collapse, as also observed in other semi-supervised learning algorithms; to see this, if we are given only the tied data, one way to learn the reward model is to have all the rewards equal to a constant. Second, mixing a proportion of the tied data and using the learning objective function \eqref{eqn:learning_objective} leads to a better performance than the case of 0\%-tied data. One subtle point here is that, in practice if we do not employ the learning objective \eqref{eqn:learning_objective} and simply drop the tied samples, this will result in a smaller sample size for learning the reward model, and an even worse performance than the 0\%-tied setting here. Third, if we look into the training dynamics of Figure \ref{fig:pos_experiment_results}, we can see that the curves with tied data samples are smoother than the ones with 0\%-tied samples. This means the inclusion of the tied samples also leads to a smoother loss landscape. 


Recent works \citep{chen2024extending, liu2024reward} have noticed the importance of incorporating tied samples and employed the Rao-Kupper model \citep{rao1967ties}, or the Bradley-Terry model with Ties (BTT) abbreviated by \citet{liu2024reward}, for preference modeling and to explore the benefits of leveraging ties. The BTT model represents preference probabilities as follows: 
\[
\mathbb{P}(y_1 \succ y_2 \mid x) \approx \frac{\exp\big(r_{\theta}(x, y_1)\big)}{\exp\big(r_{\theta}(x, y_1)\big) + \lambda \exp\big(r_{\theta}(x, y_2)\big)},
\]
\[
\mathbb{P}(y_1 \sim y_2 \mid x) \approx \frac{(\lambda^2 - 1)\exp\big(r_{\theta}(x, y_1)\big)\exp\big(r_{\theta}(x, y_2)\big)}{\left(\exp\big(r_{\theta}(x, y_1)\big) + \lambda \exp\big(r_{\theta}(x, y_2)\big)\right)\left(\lambda\exp\big(r_{\theta}(x, y_1)\big) + \exp\big(r_{\theta}(x, y_2)\big)\right)},
\]
where $y_1 \sim y_2$ denotes a tie between $y_1$ and $y_2$ and $\lambda \geq 1$ is a threshold hyperparameter that controls the likelihood of assigning ties. Notably, when $\lambda = 1$, the BTT model reduces to the canonical BT model.

While the BTT model appears promising for accommodating ties, it is not well-suited for preference learning. First, the BTT model introduces an additional hyperparameter $\lambda$ that needs to be tuned, while involving such a more complicated model is unnecessary in reward model learning. The BTT model exactly outputs a term to predict the tied probability, which is useful when predicting football games where the ties have real-world implications. However, the ultimate goal of the RM training is to provide a reference feedback function for the following RLHF step to align the LLM with human preferences: the learned RM need not exactly predict the probability of a tie but only capture the overall trending of human preferences. The complicated BTT model is unnecessary for RM learning. Second, the BTT model is only designed for the 3-level feedback case, which is not enough for the richer feedback system that has already been adopted by those LLM companies. Generalizing the BTT model to the 5-level case is much more sophisticated. There should be other 2 hyperparameters like $\lambda$ to build the 5-level model. Tuning those 3 hyperparameters together may be very difficult and time-consuming. As a comparison, our ordinal feedback generalizes naturally to all levels without introducing any additional hyperparameters. Our work is even not limited to the BT model and can be naturally applied to other probability models of preferences.



\begin{table}[ht!]
\centering
\begin{tabular}{cccccccc}
\toprule
\multirow{2}{*}{\text{Model}} & \multirow{2}{*}{\text{Tied Ratio}}& \multicolumn{2}{c}{\text{Oracle CE Loss}} & \multicolumn{2}{c}{\text{ID Accuracy}} & \multicolumn{2}{c}{\text{OOD Accuracy}} \\
\cmidrule(r){3-4} \cmidrule(r){5-6} \cmidrule(r){7-8}
 && Mean & Std & Mean & Std & Mean & Std \\
\midrule
\multirow{4}{*}{\text{Llama}}&0\% & 1.0421 & 0.0363 & 0.9224 & 0.0080 & 0.7661 & 0.0182 \\
&25\% & 0.3327 & 0.0051 & 0.9341 & 0.0173 & 0.7672 & 0.0093 \\
&50\% & 0.4187 & 0.0043 & 0.9336 & 0.0014 & 0.7545 & 0.0082 \\
&75\% & 0.5339 & 0.0052 & 0.9268 & 0.0180 & 0.7749 & 0.0008 \\
&100\% & 0.6931 & 0.0017 & 0.3428 & 0.0677 & 0.4424 & 0.0393 \\
\midrule
\multirow{4}{*}{\text{Gemma}}&0\% & 6.4762 & 0.3392 & 0.9355 & 0.0041 & 0.8319 & 0.0080 \\
&25\% & 0.6031 & 0.0019 & 0.9467 & 0.0117 & 0.8487 & 0.0100 \\
&50\% & 0.5775 & 0.0001 & 0.9526 & 0.0075 & 0.8277 & 0.0006 \\
&75\% & 0.6122 & 0.0006 & 0.9521 & 0.0069 & 0.8236 & 0.0084 \\
&100\% & 0.6931 & 0.0001 & 0.4814 & 0.0055 & 0.4928 & 0.0158 \\
\bottomrule
\end{tabular}
\caption{Model convergence statistics under different tied data ratios. The evaluation dataset remains fixed across all ratio settings and is directly sampled from the original dataset, ensuring its distribution closely matches that of the whole dataset.}
\label{tab:pos_experiment_results}
\end{table}



\begin{figure}[ht!]
\centering
\begin{subfigure}{0.48\textwidth}
    \centering
    \includegraphics[width=\linewidth]{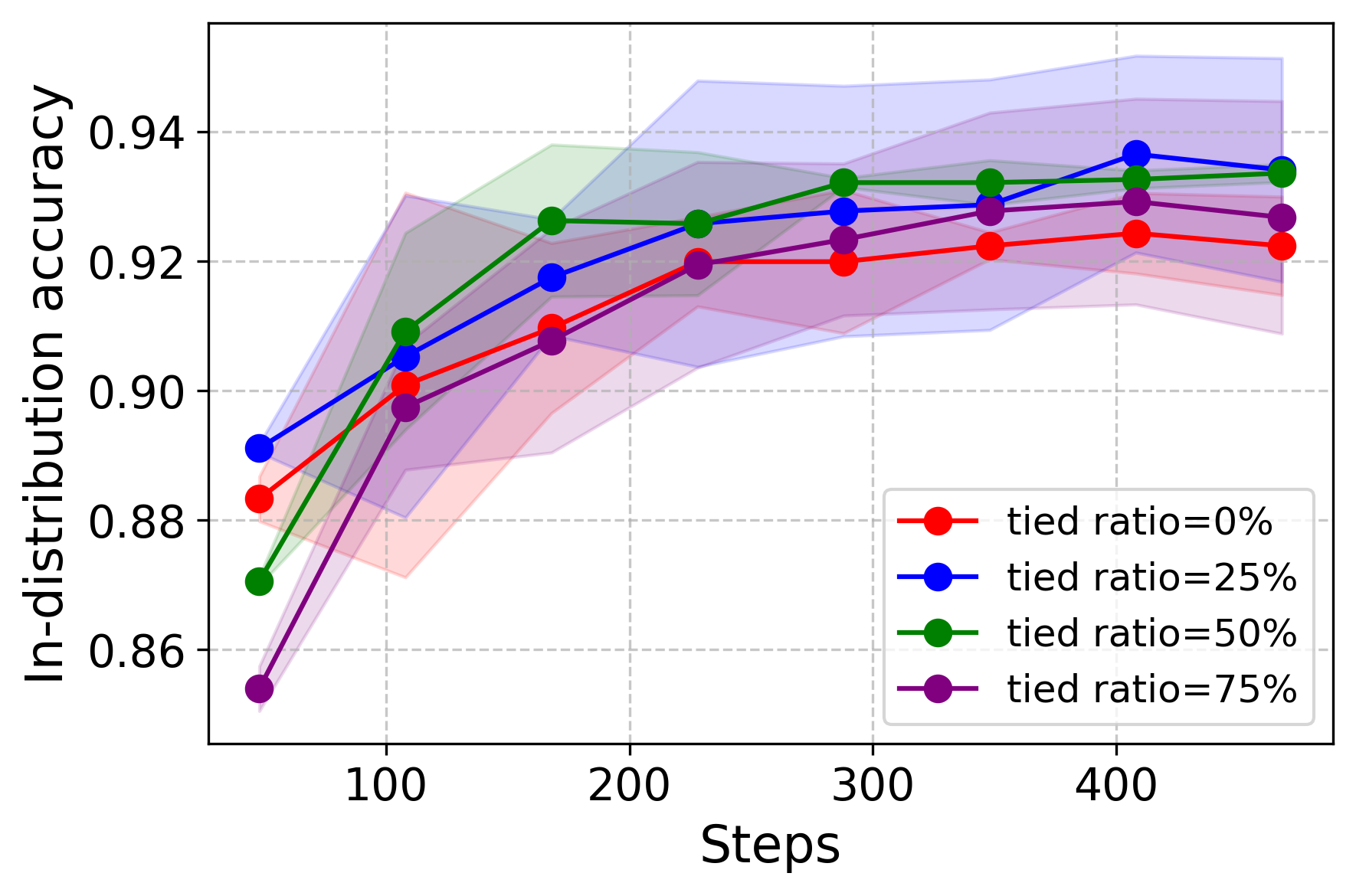}
    \caption{ID accuracy/llama models}
    \label{fig:HE_experiment_results_llama_acc_mixed_ratio_better}
\end{subfigure}
\hfill
\begin{subfigure}{0.48\textwidth}
    \centering
    \includegraphics[width=\linewidth]{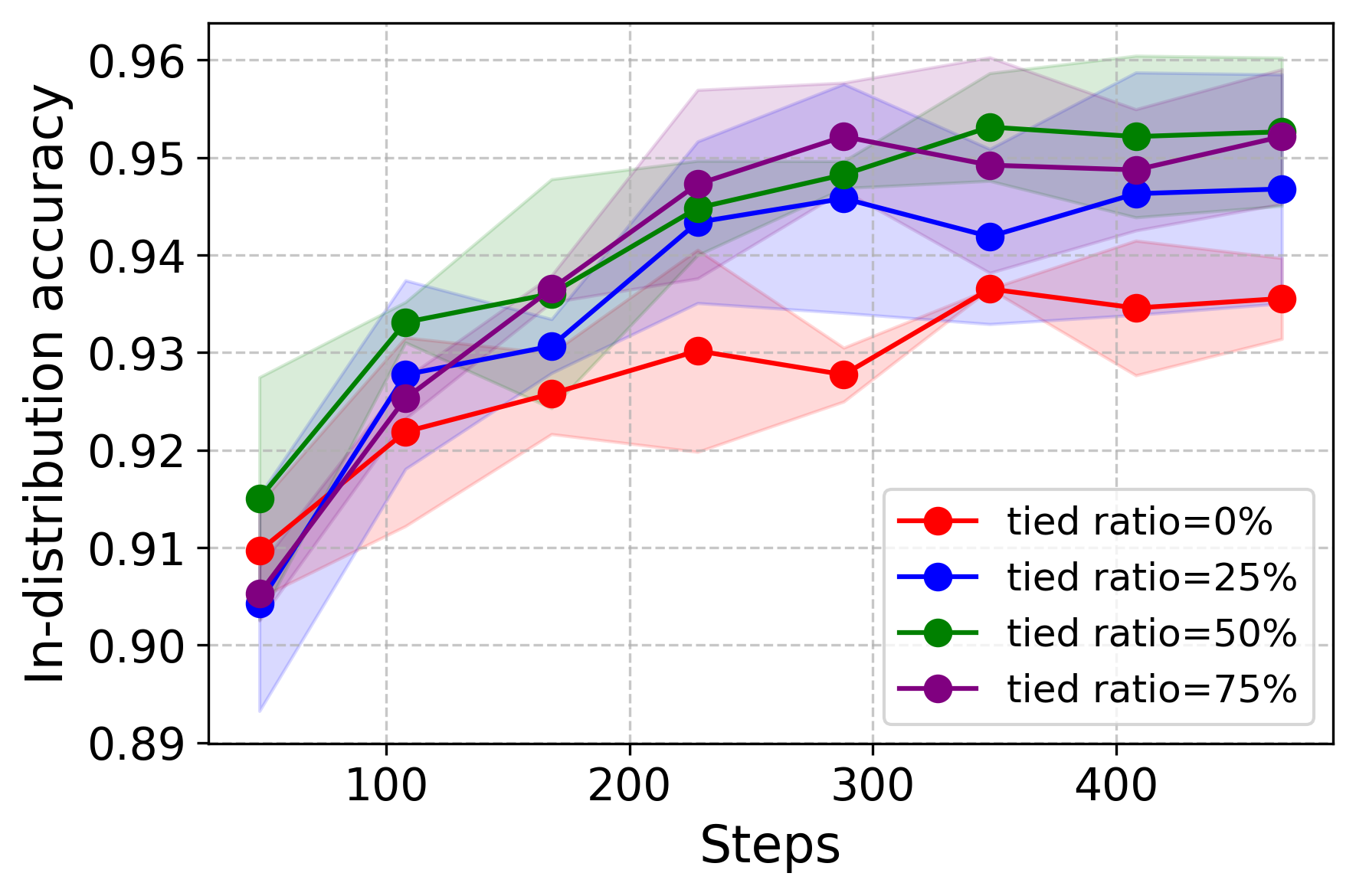}
    \caption{ID accuracy/gemma models}
    \label{fig:HE_experiment_results_gemma_acc_mixed_ratio_better}
\end{subfigure}

\vspace{0.5cm} 

\begin{subfigure}{0.48\textwidth}
    \centering
    \includegraphics[width=\linewidth]{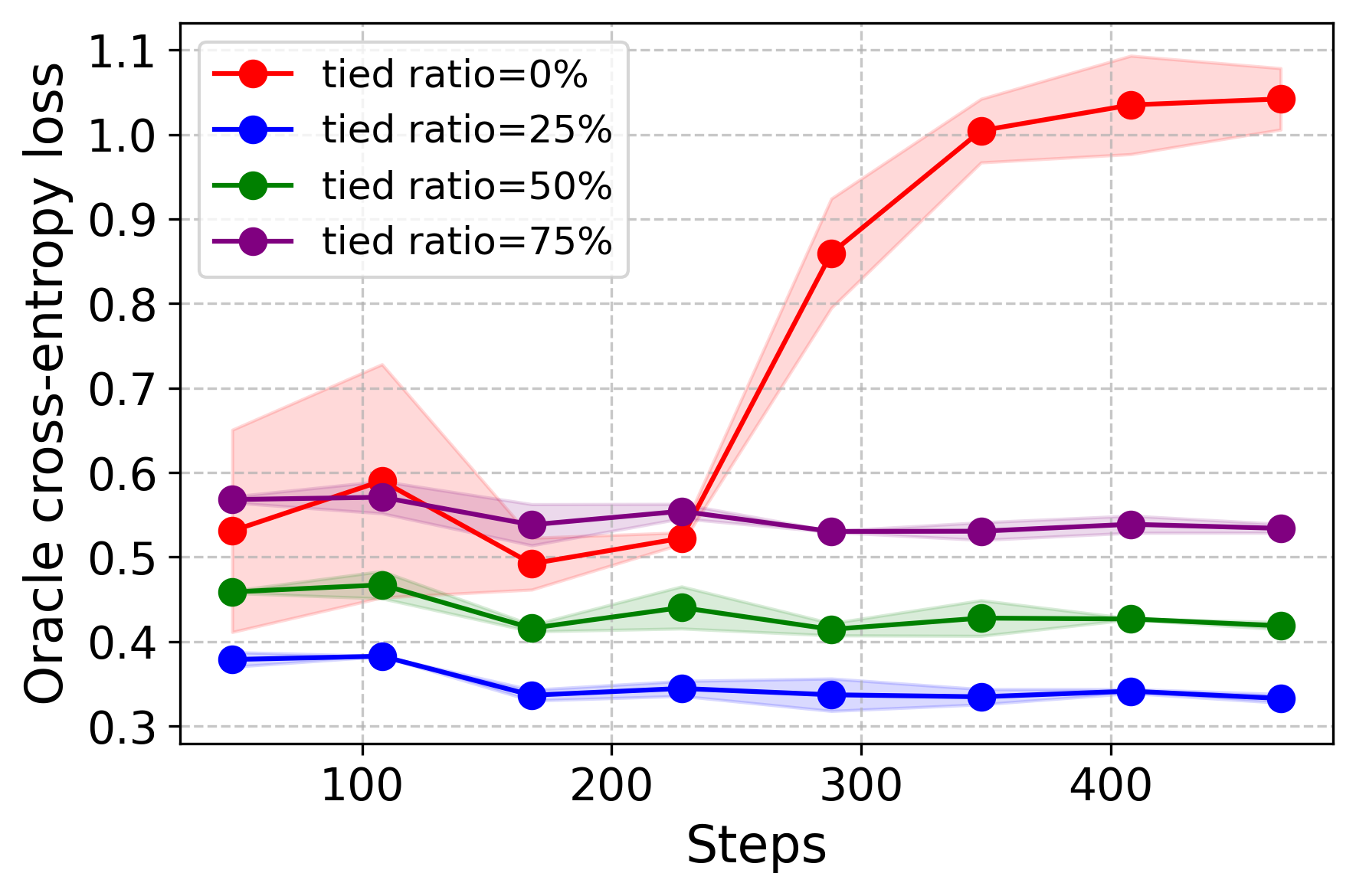}
    \caption{Oracle CE loss/llama models}
    \label{fig:HE_experiment_results_llama_loss_mixed_ratio_better}
\end{subfigure}
\hfill
\begin{subfigure}{0.48\textwidth}
    \centering
    \includegraphics[width=\linewidth]{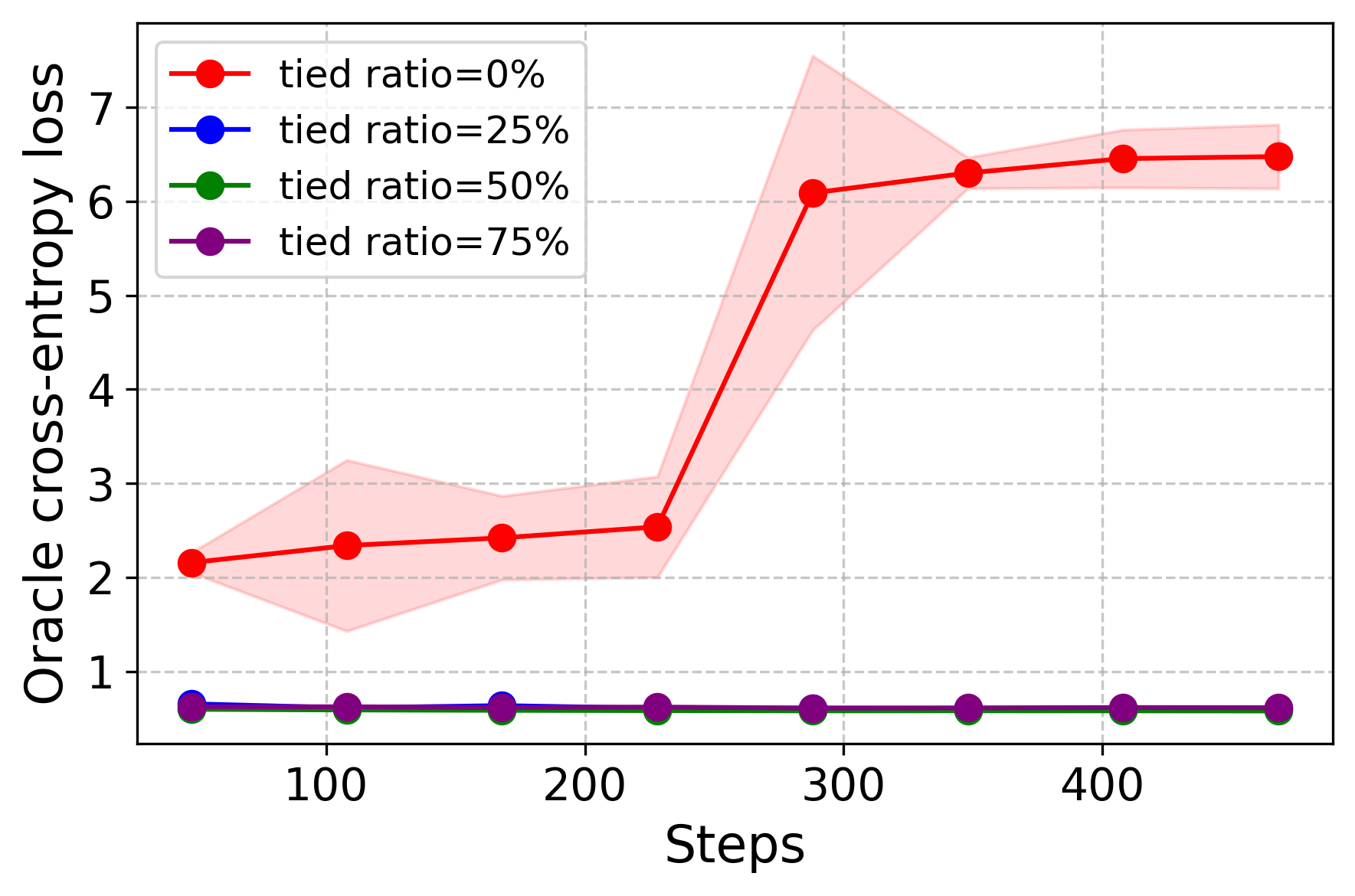}
    \caption{Oracle CE loss/gemma models}
    \label{fig:HE_experiment_results_gemma_loss_mixed_ratio_better}
\end{subfigure}

\caption{The evaluation dynamics of llama and gemma models for different tied data ratios. The 100\%-tied case is not plotted as it would detract from the clarity and readability of the plot due to its failure.}
\label{fig:pos_experiment_results}
\end{figure}


\section{Related Works}
\label{sec:related_works}

Reinforcement learning from human feedback (RLHF) originates from the idea of preference-based reinforcement learning \citep{cheng2011preference, akrour2011preference}. The term RLHF is proposed by the large language model (LLM) community and has been a mainstream framework for aligning LLMs with human preferences \citep{askell2021general, ouyang2022training}. For a more detailed survey of the history of RLHF, we refer to \citet{kaufmann2023survey}. While many people have incorporated the supervised fine-tuning (SFT) stage as a part of the RLHF pipeline \citep{ziegler2019fine, ouyang2022training, ji2023ai}, our discussion would focus on the reward modeling and policy optimization via reinforcement learning (RL) which take a supervised fine-tuned model as the starting point.

Although proven effective for aligning LLMs with human preferences, the canonical RLHF suffers in several aspects, including complicated implementation, difficult hyperparameter tuning, low sampling efficiency \citep{choshen2019weaknesses} and computational overhead \citep{yuan2024rrhf}, which promotes the studies to optimize with the relative preferences without depending on RL. One major alternative to the RLHF is the direct policy optimization (DPO) proposed by \citet{rafailov2024direct}, which directly trains the language model to increase the probability of the preferred response and decrease the other. 
\citet{wang2023beyond} discuss the influence of $f$-divergence as the constraint term in PPO and proposes $f$-DPO.
\citet{azar2024general} consider a general learning objective under pair-wise preferences, i.e., $\Psi$PO, points out the potential overfitting issue in RLHF and PPO, and further mitigates the problem with a specific instance of $\Psi$PO, i.e., IPO.
\citet{zeng2024token} investigate the optimization at a finer level and employs forward KL divergence constraints for each token to improve the alignment.
There are also some methods \citep{xu2024contrastive, ethayarajh2024kto} that try to skip the SFT stage to lower the costs and mitigate the issues while directly imitating the reference data. 
\citet{liu2023statistical} use statistical rejection sampling to address the mismatch between the training data and optimal policy data, hence enhancing the preference data collection.
\citet{amini2024direct} propose DPO with an offset (ODPO) that takes the preference strength into consideration, which forces the language model to separate the probabilities of two responses in the training dataset by an offset. Compared to our work, the authors do not theoretically prove the benefits of considering the reference strength, and their proposal requires tuning the offset.

Parallel to previously mentioned efforts to overcome the dependence on RL, researchers also directly optimize the policy under different loss functions with different types of preference data. We list some efforts non-exhaustively here, including RAFT \citep{dong2023raft}, SLiC \citep{zhao2023slic}, LiPO \citep{liu2024lipo}, RRHF \citep{yuan2024rrhf}, and PRO \citep{song2024preference}. 
For pairwise comparison data, \citet{zhao2023slic} propose a hinge type loss to encourage the LLM to output the chosen sequence more likely than the rejected one. 
For those list data that compare many different responses, \citet{dong2023raft} select the best response and fine-tunes the LLM on those best-of-$K$ data. 
\citet{yuan2024rrhf} adopt the ranking loss in the same spirit of increasing the preferred sequence's likelihood and summing up all the pairwise comparisons. 
\citet{song2024preference} replace the pairwise comparison with the preferred one against the remaining responses and recursively iterate from the most preferred one to the second-least preferred. All those ranking-based methods are purely based on the relative positions without considering the preference strength, and forcing a rank among nearly tied responses introduces additional noise.
To mitigate this issue, \citet{liu2024lipo} apply the LambdaLoss \citep{burges2006learning} to build a LiPO-$\lambda$ objective that takes the preference strength into the weights of the ranking loss. However, the proposed method has two drawbacks. First, the loss is heuristically defined and lacks theoretical guarantees. Second, the requirement of labeling each response with a quantitative reward is not easy for human annotators: on the contrary, the initial motivation behind the RLHF method is to train a reward model by pure \textit{comparison} data to avoid asking human annotators to quantify a \textit{reward} precisely. In contrast, our ordinal feedback only requires the human annotators to calibrate the qualitative \textit{comparison} with a quantity.

With the rapid development of optimization frameworks, some scholars notice the issue of diverse preferences in reward modeling, not only in the field of LLM \citep{dong2023aligndiff}. 
\citet{zeng2024diversified} propose a multi-objective reward learning method (MORE) to calibrate the reward models with the shared preferences and enhance alignment performance. 
\citet{chakraborty2024maxmin} introduce a MaxMin alignment objective to learn a mixture of diverse preference mixtures and greatly improve the overall performance. 
\citet{wang2024arithmetic} employ the multi-objective reward modeling and models the preferences of users, implementing a better objective control. 
\citet{wang2024interpretable} enhance the interpretability and performance of reward models by combining multi-objective reward modeling and mixture-of-experts techniques. Different from these works, our method considers a more practical scenario without requiring multiple labels for preference pairs and constructs the probability model for the single objective preference learning.

Two recent works \citep{chen2024extending, liu2024reward} incorporate the tied samples into the learning of the reward model by considering some generalized versions \citep{rao1967ties, davidson1970extending} of the Bradley-Terry model \citep{bradley1952rank}. By introducing a more complicated model, they enable the reward model to predict three probabilities: better, worse, and tied. The authors directly use the cross-entropy loss for the trinary classification for the RM learning. As a comparison, our ordinal feedback is more general and not limited to the Bradley-Terry model and the 3-level feedback. We also avoid introducing additional hyperparameters and keep the original training paradigm.

\section{Conclusion}
\label{sec:conclusion}
In this paper, we propose reward modeling with ordinal feedback as a generalization of the binary feedback. Such a framework fully uses the potentially useful samples and the fine-grained information discarded by the binary feedback practice. We generalize the assumption of the BT model to the general marginal unbiasedness assumption, which we name by a sociological concept ``wisdom of the crowd''. Under that assumption, we build a natural probability model for ordinal feedback. We also show that the Rademacher complexity is reduced by adopting ordinal feedback. The results also cover other loss functions (for example, the hinge loss) and other paradigms (for example, DPO). Numerical results validate the theoretical findings. Further experiments imply that mixing some tied preference samples benefits RM learning, which may be worth future exploration. Our results suggest that the annotation guideline should encourage the quantitative description (for example, 70\%) of the qualitative option (for example, ``slightly better''). Our theoretical analysis based on hierarchical expectation may be of independent interest to the field of knowledge distillation, providing a novel bias-variance trade-off perspective.

\bibliographystyle{informs2014}
\bibliography{main}

\newpage
\appendix

\section{Reward Modeling with Ordinal Feedback under Hinge Loss}
\label{apd:hinge_experiments}

In the main paper, we focus on reward modeling with ordinal feedback under the cross-entropy loss. Here we extend the analysis to the case of hinge loss. Recall that the learning objective \eqref{eqn:learning_objective} in the main paper can be interpreted as an induction of the cross-entropy loss based on the Bradley-Terry model. Given the probabilistic model of ordinal feedback, the learning objective can naturally be extended to incorporate other types of loss functions. Among these, hinge loss \citep{scholkopf2004kernel} is one of the most widely used, particularly in classification tasks, alongside cross-entropy loss. Hinge loss is commonly associated with Support Vector Machines (SVM) and is characterized by its core principle of enforcing a margin between distinct classes. Building on this observation, we define the learning objective under hinge loss as follows:
\begin{equation}
\min_{\theta} \ 
\sum_{i=1}^n 
Z_{i}\cdot 
\left[ 
	\max\left(
		0, C-\left(r_{\theta}(x_i, y_{i,1}) - r_{\theta}(x_i, y_{i,2})\right)
	\right) 
\right] 
+ 
(1-Z_{i})\cdot 
\left[ 
	\max\left(
		0, C-\left(r_{\theta}(x_i, y_{i,2}) - r_{\theta}(x_i, y_{i,1})\right)
	\right) 
\right],
\label{eqn:hinge_objective}
\end{equation}
where $C$ is the margin hyperparameter that controls the separation between preference classes. When the feedback is binary, i.e., $Z_i \in \{0, 1\}$, the above objective simplifies to the hinge loss commonly used in reward modeling \citep{liu2024skywork}. In our experiments, the margin parameter is tuned with grid search, where the search space is $\{0.5, 1, 2, 4\}$, and we select $C=2$.

To further validate the conclusions presented in Section \ref{sec:experiments}, we conduct analogous experiments under the learning objective \eqref{eqn:hinge_objective}. The feedback in the oracle feedback setting is still kept as
$$Z_i=\mathbb{P}\left(y_{i,1} \succ y_{i,2} | x_i\right) = z_{\text{oracle}}(x_i, y_{i,1}, y_{i,2})$$
just as the cross-entropy setting. The feedback in the 5-level/3-level/binary settings is sampled as the process in Theorem \ref{thm:unbiased_interpolation} by considering only the smallest interval $[z_j, z_{j+1}] \ni z_{\text{oracle}}$. We then replicate the experimental setup from Section \ref{subsec:HE_experiment}, employing llama-3.2-1b-instruct as our base model. The experiment results are reported in Figure \ref{fig:GH_experiment_results} and Table \ref{tab:GH_experiment_results}.

Experiments illustrate the same three conclusions in Section \ref{subsec:HE_experiment}: (i) more fine-grained feedback structures result in better learning for both ID and OOD performance; (ii) the fine-grained feedback (e.g. 5-level) may be a good proxy for the oracle; (iii) the generalized hinge loss handle the feedback richer than binary well.

Apart from the observations analogous to Section \ref{subsec:HE_experiment}, we observe that the hinge objective performs weaker than the cross-entropy objective. We attribute this outcome to two key factors. First, the margin hyperparameter $C$ significantly impacts the model's convergence speed and overall performance, which we only tune with a coarse grid due to computational constraints. Second, the inherent nature of the hinge objective means that only a specific subset of data points influences the final decision boundary. In contrast, the cross-entropy objective leverages the entire dataset during optimization. Given the complexity of language modeling and the intricacies of semantic space, we hypothesize that preference data embeddings are distributed in a noisy and overlapping manner such that the decision boundary may not be effectively established under the hinge objective.

\begin{figure}[ht!]
\centering
\begin{subfigure}{0.48\textwidth}
    \centering
    \includegraphics[width=\linewidth]{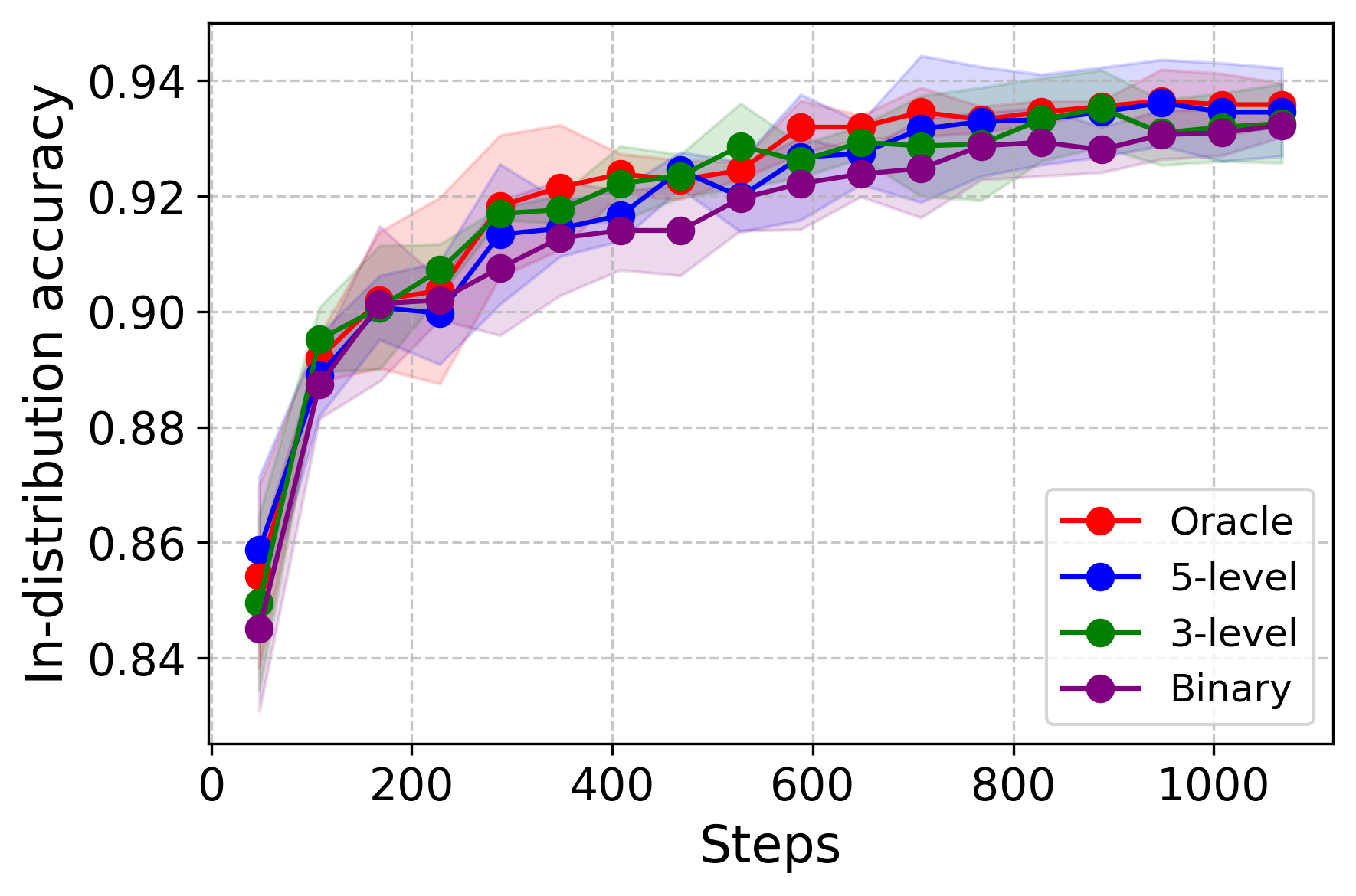}
    \caption{ID accuracy}
    \label{fig:GH_experiment_results_llama_acc}
\end{subfigure}
\hfill
\begin{subfigure}{0.48\textwidth}
    \centering
    \includegraphics[width=\linewidth]{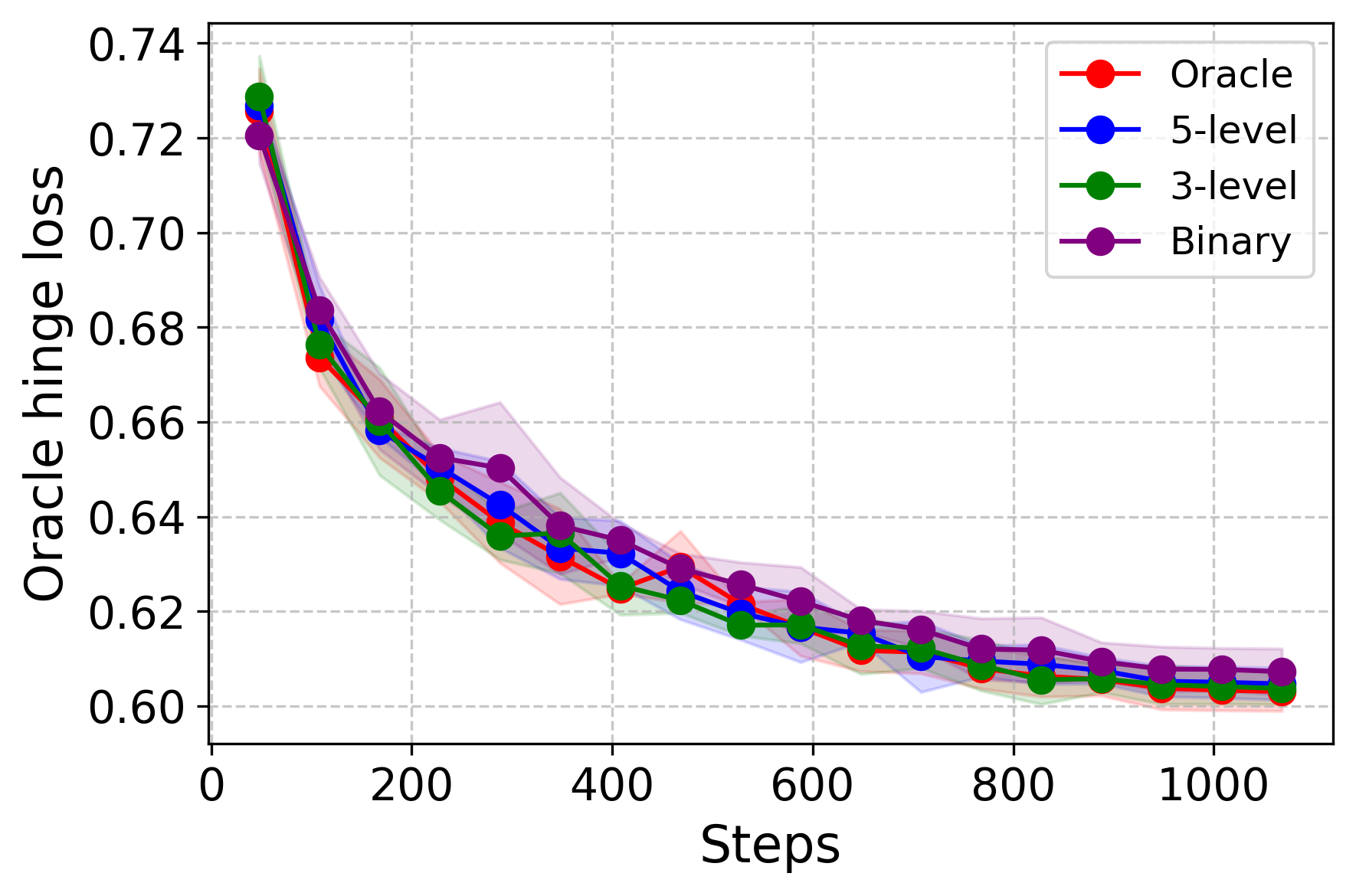}
    \caption{Oracle GH loss}
    \label{fig:GH_experiment_results_gemma_acc}
\end{subfigure}

\caption{The evaluation dynamics of llama models for different ordinal feedback labels under generalized hinge loss.}
\label{fig:GH_experiment_results}
\end{figure}

\begin{table}[ht!]
\centering
\begin{tabular}{cccccccc}
\toprule
\multirow{2}{*}{\text{Model}} & \multirow{2}{*}{\text{Feedback}}& \multicolumn{2}{c}{\text{Oracle CE Loss}} & \multicolumn{2}{c}{\text{ID Accuracy}} & \multicolumn{2}{c}{\text{OOD Accuracy}} \\
\cmidrule(r){3-4} \cmidrule(r){5-6} \cmidrule(r){7-8}
    && Mean & Std & Mean & Std & Mean & Std \\
\midrule
\multirow{4}{*}{\text{Llama}}&Oracle & 0.6030 & 0.004 & 0.9359 & 0.0036 & 0.7798  & 0.0076  \\
&5-level & 0.6046 & 0.0003 & 0.9345 & 0.0076 & 0.7660  &  0.0068  \\
&3-level & 0.6037 & 0.0003 & 0.9326 & 0.0068 & 0.7617  &  0.0163 \\
&Binary & 0.6072 & 0.0005 & 0.9322 & 0.0020 & 0.7580 &  0.0040  \\
\bottomrule
\end{tabular}
\caption{Model convergence statistics for llama under generalized hinge loss.}
\label{tab:GH_experiment_results}
\end{table}

\section{Proofs and Theoretical Discussions}

\subsection{Proof of Theorem \ref{thm:unbiased_interpolation}}
\begin{proof}
We first prove that the constructed ordinal feedback $Z \sim \mu_{j,k}$ satisfies Assumption \ref{assm:wisdom_of_crowd}, where
\[
\mu_{j, k}(z) =
\begin{cases}
(z_k - z_{\text{oracle}})\big/(z_k - z_j), \quad &\text{if }z=z_j,\\
(z_{\text{oracle}} - z_j)\big/(z_k - z_j), \quad &\text{if }z=z_k,\\
0, \quad & \text{otherwise}.
\end{cases}
\]

If $z_{\text{oracle}} = z_i$ for any $z_i \in \mathcal{Z}$, then the constructed measure is the Dirac measure for $z_{\text{oracle}}$, which automatically fulfills the requirement.

We consider the cases where $z_{\text{oracle}} \in (z_j, z_k)$ for some $z_j, z_k \in \mathcal{Z}$. Then the expectation of $Z$ w.r.t. $\mu_{j,k}$ is
\begin{align*}
\mathbb{E}_{Z \sim \mu_{j,k}}[Z] & = \mu_{j,k}(z_j) \cdot z_j + \mu_{j,k}(z_k) \cdot z_k\\
& = (z_j z_k - z_{\text{oracle}} z_j + z_{\text{oracle}} z_k - z_j z_k) \big/ (z_k - z_j) \\
& = z_{\text{oracle}}.
\end{align*}

For the second part of Theorem \ref{thm:unbiased_interpolation}, we prove the conclusion case by case. Suppose we have an ordinal feedback $Z\colon \Omega \rightarrow \mathcal{Z}$ satisfying Assumption \ref{assm:wisdom_of_crowd}. If $z_{\text{oracle}} \in \mathcal{Z}$, for example, $z_{\text{oracle}} = z_{i_0}$, then
\[
\mu_{j,i_0} = \mu_{i_0,k} = \delta_{z_{i_0}},
\]
implying we can set those coefficients $\alpha_{j, i_0}$'s and $\alpha_{i_0, k}$'s to be arbitrary non-negative real numbers such that $\sum_{j}\alpha_{j, i_0} + \sum_{k}\alpha_{i_0, k} = \mu(z_{i_0})$ without affecting the measure on the other points.

Therefore, we can (without loss of generality) assume $z_{\text{oracle}} \notin \mathcal{Z}$. Assume $z_1 < \dots < z_{m_1} < z_{\text{oracle}} < z_{m_1 + 1} < \dots < z_m$.

Case (i): when there is only one point on each side of $z_{\text{oracle}}$, that is, $m_1 = 1$ and $m_1 + 1 = m$.\\
Then by Assumption \ref{assm:wisdom_of_crowd}, we have
\[
\mu(z_1) \cdot z_1 + \mu(z_m) \cdot z_m = z_{\text{oracle}}.
\]
Combining it with the constraint such that
\[
\mu(z_1) + \mu(z_m) = 1,
\]
we have
\[
\mu(z_1) = (z_m - z_{\text{oracle}}) \big/ (z_m - z_1),\quad \mu(z_m) = (z_{\text{oracle}}- z_1) \big/ (z_m - z_1),
\]
which is exactly the same as $\mu_{1, m}$.\\

Case (ii): when there is only one point larger than $z_{\text{oracle}}$, that is, $m_1 + 1 = m$.\\
Then we will prove that there exist non-negative real numbers $\alpha_{j, m}$ such that
\[
\mu = \sum_{j} \ \alpha_{j, m} \ \mu_{j, m}.
\]
In fact, we can construct (for any $j \neq m$)
\[
\alpha_{j, m} \coloneqq \frac{\mu(z_j)}{\mu_{j, m}(z_j)}.
\]
Then each $\alpha_{j, m}$ is non-negative. Furthermore, by Assumption \ref{assm:wisdom_of_crowd},
\begin{align*}
\sum_{j \neq m} \alpha_{j, m} \mu_{j, m}(z_m) & = \sum_{j \neq m} \frac{\mu(z_j)}{\mu_{j,m}(z_j)} \cdot \mu_{j,m}(z_m)\\
& = \frac{\sum_{j \neq m} \mu(z_j) \cdot z_{\text{oracle}} - \sum_{j \neq m} \mu(z_j) \cdot z_j}{z_m - z_{\text{oracle}}}\\
& = \frac{(1-\mu(z_m)) \cdot z_{\text{oracle}} - (z_{\text{oracle}} - \mu(z_m)\cdot z_m)}{z_m - z_{\text{oracle}}} \\
& = \mu(z_m),
\end{align*}
indicating that $\mu = \sum_{j \neq m} \alpha_{j, m} \mu_{j, m}$.\\
By the property of probability measures, we can easily see that
\[
\sum_{j \neq m} \alpha_{j, m} = 1.
\]\\

Case (iii): when there is only one point smaller than $z_{\text{oracle}}$, that is, $m_1 = 1$. This case can be proved similarly to Case (ii).\\

Case (iv): general cases where there are (possibly) multiple points on each side of $z_{\text{oracle}}$. We prove it by induction. Suppose there are $a$ elements in $\mathcal{Z}$ smaller than $z_{\text{oracle}}$ and $b$ elements larger than $z_{\text{oracle}}$. Denote the case by $(a, b)$. Suppose that the conclusion has been proved for all the cases $(a, b)$ if $a < m_1$ or $b < m - m_1$. Now we prove it for the case $(m_1, m - m_1)$. We first define the corresponding index (if there are multiple elements in a tie, select arbitrarily)
\begin{equation}
i_1 \coloneqq \argmin_i |z_i - z_{\text{oracle}}| \cdot \mu(z_i).
\label{eqn:aux_least_weight}
\end{equation}
Without loss of generality, we assume $z_{i_1} < z_{\text{oracle}}$. Then we have $m_1 > 1$ due to Assumption \ref{assm:wisdom_of_crowd}. We now construct a coefficient $\alpha_{i_1, m_1 + 1}$ such that
\[
\alpha_{i_1, m_1 + 1} \coloneqq \frac{\mu(z_{i_1})}{\mu_{i_1, m_1+1}(z_{i_1})}.
\]
By the definition \eqref{eqn:aux_least_weight}, we have
\[
\mu(z_{m_1 + 1}) \geq \alpha_{i_1, m_1 + 1} \cdot \mu_{i_1, m_1 + 1}(z_{m_1 + 1}).
\]
Hence we can construct a new measure
\[
\mu^\prime(z_i) = 
\begin{cases}
0, \quad & \text{if } i = i_1;\\
\big(\mu(z_{m_1 + 1}) - \alpha_{i_1, m_1 + 1} \cdot \mu_{i_1, m_1 + 1}(z_{m_1 + 1})\big) \Big/ (1 - \alpha_{i_1, m_1 + 1}), \quad & \text{if } i = m_1 + 1;\\
\mu(z_i) \big/ (1 - \alpha_{i_1, m_1 + 1}), \quad & \text{otherwise}.
\end{cases}
\]
This measure can be easily verified as a probability measure with $m_1 - 1$ elements smaller than $z_{\text{oracle}}$. By induction hypothesis, we can construct non-negative real numbers $\alpha_{j,k}$'s summing up to 1 such that
\[
\mu^\prime = \sum_{j, k} \alpha_{j, k}^\prime \cdot \mu_{j,k}.
\]
Then we have
\[
\mu = \alpha_{i_1, m_1 + 1} \cdot \mu_{i_1, m_1 + 1} + \sum_{j,k} \frac{\alpha_{j, k}^\prime}{1 - \alpha_{i_1, m_1 + 1}} \cdot \mu_{j, k},
\]
of which the coefficients are non-negative and summing up to 1.
\end{proof}

\subsection{Proof of Proposition \ref{prop:loss_are_affine}}
\label{subapd:loss_affine}
\begin{proof}
\textbf{Cross-entropy loss:}

The cross-entropy loss is
\[
\ell_{\text{ce}}(Z, z) = -\left[Z \log(z) + (1-Z) \log(1-z)\right],
\]
where $z \in [0, 1]$, which is the probability under the BT model. The cross-entropy loss is affine to $Z$:
\allowdisplaybreaks
\begin{align*}
\mathbb{E}_{Z}\big[\ell_{\text{ce}}(Z, z)\big]
& = -\sum_{j, z_j \in \mathcal{Z}} \mathbb{P}(Z = z_j) \cdot \left[z_j \cdot \log(z) + (1-z_j) \cdot \log(1-z)\right]\\
& = -\left[\Big(\sum_{j, z_j \in \mathcal{Z}} \mathbb{P}(Z = z_j) \cdot z_j\Big) \cdot \log(z) + \bigg(1 - \Big(\sum_{j, z_j \in \mathcal{Z}} \mathbb{P}(Z = z_j) \cdot z_j\Big)\bigg) \cdot \log(1-z)\right]\\
& = -\left[\mathbb{E}[Z] \cdot \log(z) + \big(1-\mathbb{E}[Z]\big) \cdot \log(1-z)\right]\\
& = \ell_{\text{ce}}\big(\mathbb{E}[Z], z\big).
\end{align*}

\textbf{Hinge loss:}

The hinge loss sets the loss to be
\[
\ell_{\text{hinge}}(Z, z) = \mathbf{1}\{Z=1\} \cdot \max\big(C-z, 0\big) + \mathbf{1}\{Z=0\} \cdot \max\big(C+z, 0\big),
\]
where $z \in \mathbb{R}$, which is the difference of the reward functions $r_\theta(x, y_1) - r_\theta(x, y_2)$. We can generalize the hinge loss as
\[
\ell_{\text{hinge}}(Z, z) \coloneqq Z \cdot \max\big(C-z, 0\big) + (1-Z) \cdot \max\big(C+z, 0\big)
\]
The (generalized) hinge loss is affine to $Z$ by a similar argument to the cross-entropy loss:
\allowdisplaybreaks
\begin{align*}
& \phantom{=} \mathbb{E}_{Z}\big[\ell_{\text{hinge}}(Z, z)\big] \\
& = \sum_{j, z_j \in \mathcal{Z}} \mathbb{P}(Z = z_j) \cdot \left[z_j \cdot \max\big(C+z,0\big) + (1-z_j) \cdot \max\big(C-z,0\big)\right]\\
& = \Big(\sum_{j, z_j \in \mathcal{Z}} \mathbb{P}(Z = z_j) \cdot z_j\Big) \cdot \max\big(C+z,0\big) + \bigg(1 - \Big(\sum_{j, z_j \in \mathcal{Z}} \mathbb{P}(Z = z_j) \cdot z_j\Big)\bigg) \cdot \max\big(C-z,0\big)\\
& = \mathbb{E}[Z] \cdot \max\big(C+z,0\big) + \big(1-\mathbb{E}[Z]\big) \cdot \max\big(C-z,0\big)\\
& = \ell_{\text{hinge}}\big(\mathbb{E}[Z], z\big).
\end{align*}
\end{proof}

\subsection{Proof of Proposition \ref{prop:same_population_loss}}
\label{subapd:same_population_loss}
\begin{proof}
We can prove a stronger conclusion such that for any $(x, y_1, y_2) \in \mathcal{X} \times \mathcal{Y}^2$,
\[
\mathbb{E}_Z\Big[\ell\big(Z, h(x, y_1, y_2)\big)\Big| (x, y_1, y_2)\Big] = \mathbb{E}_{Z^\prime}\Big[\ell\big(Z^\prime, h(x, y_1, y_2)\big)\Big| (x, y_1, y_2)\Big].
\]
Since both $Z$ and $Z^\prime$ satisfy Assumption \ref{assm:wisdom_of_crowd}, we have for any $(x, y_1, y_2) \in \mathcal{X} \times \mathcal{Y}^2$,
\[
\mathbb{E}[Z | (x, y_1, y_2)] = z_{\text{oracle}}(x, y_1, y_2) = \mathbb{E}[Z^\prime | (x, y_1, y_2)].
\]
Then by the affinity condition \eqref{eqn:loss_affine}, we have for any $h \in \mathcal{H}$,
\begin{align*}
& \phantom{=}\mathbb{E}_Z\Big[\ell\big(Z, h(x, y_1, y_2)\big)\Big| (x, y_1, y_2)\Big]\\
& = \ell\big(\mathbb{E}[Z|(x, y_1, y_2)], h(x, y_1, y_2)\big) \\
& = \ell\big(z_{\text{oracle}}(x, y_1, y_2), h(x, y_1, y_2)\big).
\end{align*}
The same arguments also lead to that
\[
\mathbb{E}_{Z^\prime}\Big[\ell\big(Z^\prime, h(x, y_1, y_2)\big)\Big| (x, y_1, y_2)\Big] = \ell\big(z_{\text{oracle}}(x, y_1, y_2), h(x, y_1, y_2)\big),
\]
which concludes the proof.
\end{proof}

\subsection{Proof of Proposition \ref{prop:existence_of_hierarchical_expectation}}

\begin{proof}
We construct $(W, W^\prime)$ as follows: we set $W$ to be identically distributed as $Z$, and
\[
\mathbb{P}_0(W^\prime = z_k^\prime | W = z_j) \coloneqq \beta_{j, k}.
\]
Then by property (a) of $\beta_{j, k}$'s, we have
\[
\mathbb{E}[W^\prime|W = z_j] = z_j,
\]
indicating
\[
W = \mathbb{E}[W^\prime | W].
\]
By property (b) of $\beta_{j, k}$'s, we have that the constructed $W^\prime$ has a marginal distribution identical to $Z^\prime$. Hence, $(W, W^\prime)$ is a coupling satisfying the hierarchical expectation requirements.\\

On the other hand, if we have a coupling $(W, W^\prime)$ satisfying the hierarchical expectation condition, we can easily verify that the conditional probabilities satisfy the requirements in Proposition \ref{prop:existence_of_hierarchical_expectation}.
\end{proof}

\subsection{Proof of Corollary \ref{corol:hierarchical_expectation_examples}}
\begin{proof}
For case (a) such that $Z = z_{\text{oracle}} \eqqcolon z_1$ almost surely, we have $\mu = \delta_{z_{\text{oracle}}}$. By Assumption \ref{assm:wisdom_of_crowd}, the probability measure $\mu^\prime$ of $Z^\prime$ satisfies
\[
\sum_{k, z_k^\prime \in \mathcal{Z}^\prime} \mu^\prime(z_k^\prime) \ z_k^\prime = z_{\text{oracle}}
\]
almost surely. Setting $\beta_{1,k} \coloneqq \mu^\prime(z_k^\prime)$ fulfills the properties in Proposition \ref{prop:existence_of_hierarchical_expectation}. \\

For case (b) such that $\mathcal{Z}^\prime = \{z_1^\prime \coloneqq 0, z_2^\prime \coloneqq 1\}$, we can construct $\beta_{j, k}$'s as
\[
\beta_{j, 1} = 1 - z_j, \quad \beta_{j, 2} = z_j,
\]
for any $z_j \in \mathcal{Z}$. Then one can easily verify that the construction satisfies the requirement (a) in Proposition \ref{prop:existence_of_hierarchical_expectation}. For part (b), by Assumption \ref{assm:wisdom_of_crowd}, we have
\[
\sum_{j, z_j \in \mathcal{Z}} \alpha_j z_j = z_{\text{oracle}},
\]
and
\[
\alpha_2^\prime = z_{\text{oracle}}.
\]
Combining the above two equalities, we have
\[
\sum_{j, z_j \in \mathcal{Z}} \alpha_j \beta_{j, 2} =
\alpha_2^\prime.
\]
On the other hand,
\[
\sum_{j, z_j \in \mathcal{Z}} \alpha_j (1 - z_j) = 1 - z_{\text{oracle}},
\]
and
\[
\alpha_1^\prime = 1 - z_{\text{oracle}},
\]
which implies that
\[
\sum_{j, z_j \in \mathcal{Z}} \alpha_j \beta_{j, 1} =
\alpha_1^\prime.
\]
\end{proof}

\subsection{Proof of Theorem \ref{thm:Rademacher_complexity}}
\begin{lemma}
\label{lemma:affine_is_cvx}
Any affine function is also convex.
\end{lemma}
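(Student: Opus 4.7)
The statement is essentially a textbook fact, so the proposal is short. The plan is to unpack the definition of ``affine'' and observe that the convexity inequality is satisfied with equality.

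First I would recall that an affine function $f$ on a vector space satisfies $f(\lambda x + (1-\lambda) y) = \lambda f(x) + (1-\lambda) f(y)$ for all $x, y$ in the domain and all $\lambda \in [0, 1]$; this follows immediately from writing $f(u) = L(u) + c$ for some linear $L$ and constant $c$, and distributing $L$ across the convex combination. Then, since convexity of $f$ requires only the weaker inequality $f(\lambda x + (1-\lambda) y) \leq \lambda f(x) + (1-\lambda) f(y)$, and any equality trivially implies the corresponding non-strict inequality, the claim follows.

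There is no real obstacle here; the only thing to be slightly careful about is matching the notion of ``affine'' used in Definition \ref{def:affine} (affinity of $\ell(Z, z)$ in the random variable $Z$ via $\mathbb{E}_Z[\ell(Z,z)] = \ell(\mathbb{E}[Z], z)$) to the standard affine-implies-convex statement. For the lemma itself, however, the abstract statement about affine maps is exactly what is needed downstream (most likely to apply Jensen's inequality or a Rademacher contraction-style argument in the proof of Theorem \ref{thm:Rademacher_complexity}), and the one-line proof via equality in the convex-combination identity is entirely sufficient.
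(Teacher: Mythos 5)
Your proof is correct and is exactly the standard textbook argument; the paper itself states this lemma without proof (explicitly deferring to convex analysis textbooks), so your one-line verification that the affine equality implies the convexity inequality is entirely consistent with what the authors intend. Your remark about matching the abstract notion of affineness to the feedback-affinity condition of Definition \ref{def:affine} is also the right thing to watch, since that translation is carried out downstream in Lemma \ref{lemma:sup_loss_cvx} rather than here.
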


\begin{lemma}
\label{lemma:sup_cvx_cvx}
The pointwise supremum of a family of convex functions is still convex. In other words, for any family of convex functions $f_s(\cdot)$ where $s \in S$, we have
$
\sup_{s \in S} f_s(\cdot)
$
still being convex.
\end{lemma}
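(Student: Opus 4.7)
The plan is to verify the convexity inequality directly from the definition. Let $g(x) \coloneqq \sup_{s \in S} f_s(x)$, and fix arbitrary points $x_1, x_2$ in the common domain together with a weight $\lambda \in [0,1]$. The goal is to establish $g(\lambda x_1 + (1-\lambda) x_2) \leq \lambda g(x_1) + (1-\lambda) g(x_2)$, which is exactly what is required to conclude that $g$ is convex.

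The key step is a two-stage bound on the left-hand side, uniform in $s$. For each fixed $s \in S$, convexity of $f_s$ gives $f_s(\lambda x_1 + (1-\lambda) x_2) \leq \lambda f_s(x_1) + (1-\lambda) f_s(x_2)$. Since $f_s(x_i) \leq \sup_{s' \in S} f_{s'}(x_i) = g(x_i)$ for $i = 1, 2$ and the weights $\lambda, 1-\lambda$ are non-negative, the right-hand side is in turn at most $\lambda g(x_1) + (1-\lambda) g(x_2)$, a quantity independent of $s$. Taking the supremum over $s \in S$ on the left-hand side of the resulting chain of inequalities yields the desired convexity inequality.

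There is essentially no real obstacle: this is a textbook two-line fact from convex analysis, and no auxiliary machinery is needed beyond the definition of convexity and monotonicity of the supremum. The only mild subtlety worth flagging is that $g$ could a priori take the value $+\infty$ at some points; however, the inequality still makes sense in $\overline{\mathbb{R}}$ with the usual conventions, and in the paper's applications the $f_s$ arise from bounded loss-hypothesis compositions, so $g$ is finite on the relevant domain and the standard notion of convexity applies without further caveat.
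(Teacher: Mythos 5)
Your proof is correct and is exactly the standard textbook argument; the paper itself states this lemma without proof, deferring to convex analysis textbooks, so your two-stage bound (convexity of each $f_s$ followed by monotonicity of the supremum) supplies precisely the omitted details. Your remark about the supremum possibly being $+\infty$ is a reasonable caveat and does not affect the application in Lemma \ref{lemma:sup_loss_cvx}.
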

We provide those two lemmas without proof since the proof can be found in any convex analysis textbook.
A corollary is the following.
\begin{lemma}
\label{lemma:sup_loss_cvx}
If the loss function satisfies the affinity condition \eqref{eqn:loss_affine}, then for any hypothesis class $\mathcal{H}$, the following function
\[
\sup_{h \in \mathcal{H}} \ \sum_{i=1}^n \varepsilon_i \ell\big(\cdot , h(x_i, y_{i, 1}, y_{i, 2})\big)
\]
is convex for any realization of $\varepsilon_i$ taking values in $\{+1, -1\}$.
\end{lemma}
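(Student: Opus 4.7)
The plan is to chain the two preceding lemmas after reinterpreting the affinity condition \eqref{eqn:loss_affine} as a pointwise statement about the section $\ell(\cdot, z)$. For any fixed Rademacher realization $\varepsilon \in \{+1,-1\}^n$ and any $h \in \mathcal{H}$, I will show that $u \mapsto \sum_{i=1}^n \varepsilon_i \ell(u, h(x_i, y_{i,1}, y_{i,2}))$ is affine in $u$, hence convex by Lemma \ref{lemma:affine_is_cvx}, and then the supremum over $h$ is convex by Lemma \ref{lemma:sup_cvx_cvx}.

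The first step is to read \eqref{eqn:loss_affine} as affinity of $\ell(\cdot, z)$ in its first argument. Specializing the identity to a two-point distribution supported on $\{a, b\} \subseteq [0,1]$ with mass $\lambda$ at $a$ and $1-\lambda$ at $b$ yields $\lambda \ell(a, z) + (1-\lambda)\ell(b, z) = \ell(\lambda a + (1-\lambda)b, z)$, which is precisely the defining identity of an affine function on $[0,1]$. Consequently, for every $i$ and every $h \in \mathcal{H}$, the map $u \mapsto \ell(u, h(x_i, y_{i,1}, y_{i,2}))$ is affine in $u$.

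Multiplication by a scalar $\varepsilon_i \in \{+1,-1\}$ preserves affinity, and a finite sum of affine functions is affine, so $u \mapsto \sum_{i=1}^n \varepsilon_i \ell(u, h(x_i, y_{i,1}, y_{i,2}))$ is affine in $u$ for every fixed $h$. Applying Lemma \ref{lemma:affine_is_cvx} renders each such function convex in $u$, and Lemma \ref{lemma:sup_cvx_cvx} applied to the family indexed by $h \in \mathcal{H}$ yields convexity of the pointwise supremum, which is the desired conclusion. The only nontrivial step is the reinterpretation in the first paragraph, where one must translate the probabilistic equality between expectations into a deterministic linear identity on the convex hull of the feedback values; once this bridge is built, the rest is a direct invocation of the two lemmas already stated.
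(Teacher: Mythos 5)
Your proposal is correct and follows essentially the same route as the paper's proof: show the sum is affine in the feedback argument, invoke Lemma \ref{lemma:affine_is_cvx} to get convexity for each $h$, and then Lemma \ref{lemma:sup_cvx_cvx} for the supremum. The only difference is that you explicitly justify the step the paper leaves implicit --- translating the expectation identity \eqref{eqn:loss_affine} into the pointwise affine identity $\lambda\ell(a,z)+(1-\lambda)\ell(b,z)=\ell(\lambda a+(1-\lambda)b,z)$ via two-point distributions --- which is a welcome clarification rather than a different argument.
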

\begin{proof}[Proof of Lemma \ref{lemma:sup_loss_cvx}]
The argument breaks up into three pieces: first, any linear combination of affine functions is still affine (which is straightforward), hence
\[
\sum_{i=1}^n \varepsilon_i \ell\big(\cdot , h(x_i, y_{i, 1}, y_{i, 2})\big)
\]
is affine as long as condition \eqref{eqn:loss_affine} holds.

Second, any affine function is also convex (Lemma \ref{lemma:affine_is_cvx}), therefore,
\[
\sum_{i=1}^n \varepsilon_i \ell\big(\cdot , h(x_i, y_{i, 1}, y_{i, 2})\big)
\]
is also convex for any $h \in \mathcal{H}$.

Third, the supreme of any class of convex functions is still convex (Lemma \ref{lemma:sup_cvx_cvx}), which means taking the supreme over the hypothesis class $\mathcal{H}$ suffices.
\end{proof}

\begin{proof}[Proof of Theorem \ref{thm:Rademacher_complexity}]
We denote the hierarchical expectation coupling of $Z$ and $Z^\prime$ by $(W, W^\prime)$. By direct inspection, we have
\allowdisplaybreaks
\begin{align*}
\mathrm{Rad}_{\mathcal{Z}^\prime, n}(\ell \circ \mathcal{H}) & = \frac{1}{n}\mathbb{E}_{x, y, Z^\prime, \varepsilon} \left[\sup_{h \in \mathcal{H}} \sum_{i=1}^n \varepsilon_i \ell\big(Z_i^\prime, h(x_i, y_{i, 1}, y_{i, 2})\big)\right]\\
& = \frac{1}{n}\mathbb{E}_{x, y, W, W^\prime, \varepsilon} \left[\sup_{h \in \mathcal{H}} \sum_{i=1}^n \varepsilon_i \ell\big(W_i^\prime, h(x_i, y_{i, 1}, y_{i, 2})\big)\right]\\
& = \frac{1}{n}\mathbb{E}_{x, y, \varepsilon} \left[\mathbb{E}_{W, W^\prime}\left[\sup_{h \in \mathcal{H}} \sum_{i=1}^n \varepsilon_i \ell\big(W_i^\prime, h(x_i, y_{i, 1}, y_{i, 2})\big)\right]\right]\\
& = \frac{1}{n}\mathbb{E}_{x, y, \varepsilon} \left[\mathbb{E}_{W}\left[\mathbb{E}_{W^\prime}\left[\sup_{h \in \mathcal{H}} \sum_{i=1}^n \varepsilon_i \ell\big(W_i^\prime, h(x_i, y_{i, 1}, y_{i, 2})\big)\middle| W \right]\right]\right]\\
& \geq \frac{1}{n}\mathbb{E}_{x, y, \varepsilon} \left[\mathbb{E}_{W}\left[\sup_{h \in \mathcal{H}} \sum_{i=1}^n \varepsilon_i \ell\big(\mathbb{E}_{W^\prime}\left[W_i^\prime\middle| W \right], h(x_i, y_{i, 1}, y_{i, 2})\big)\right]\right]\\
& = \frac{1}{n}\mathbb{E}_{x, y, \varepsilon} \left[\mathbb{E}_{W}\left[\sup_{h \in \mathcal{H}} \sum_{i=1}^n \varepsilon_i \ell\big(W, h(x_i, y_{i, 1}, y_{i, 2})\big)\right]\right]\\
& = \frac{1}{n}\mathbb{E}_{x, y, \varepsilon} \left[\mathbb{E}_{Z}\left[\sup_{h \in \mathcal{H}} \sum_{i=1}^n \varepsilon_i \ell\big(Z, h(x_i, y_{i, 1}, y_{i, 2})\big)\right]\right]\\
& = \frac{1}{n}\mathbb{E}_{x, y, Z, \varepsilon} \left[\sup_{h \in \mathcal{H}} \sum_{i=1}^n \varepsilon_i \ell\big(Z, h(x_i, y_{i, 1}, y_{i, 2})\big)\right]\\
& = \mathrm{Rad}_{\mathcal{Z}, n}(\ell \circ \mathcal{H}),
\end{align*}
where the first equality is by definition, the second equality is because the coupling's marginal distribution on $W^\prime$ is equal to that of $Z^\prime$, the third equality is due to the exchangeability of the order of integration (by Fubini's Theorem), the fourth equality is because of the tower property of the conditional expectation, the first inequality is the result of Lemma \ref{lemma:sup_loss_cvx} and Jensen's inequality, the fifth equality is owing to the definition of the hierarchical expectation (Definition \ref{def:hierarchical_expectation}), the sixth is on account of the property of the coupling again, the seventh thanks to Fubini's Theorem again, and the last equality is the definition of Rademacher complexity again.
\end{proof}

\subsection{Proof of Corollary \ref{corol:ordinal_better_than_binary}}
\begin{proof}
A straightforward conclusion of Theorem \ref{thm:Rademacher_complexity} and Corollary \ref{corol:hierarchical_expectation_examples}.
\end{proof}

\subsection{Proof of Theorem \ref{thm:soft_label_benefits}}
\label{subapd:soft_label_benefits}
\begin{proof}
Denote the one-hot vector at dimension $j$ as $e_j$. The following arguments are made for any $x \in \mathcal{X}$, and we omit the dependence on $x$ for notation simplicity.

We only need to show that $\bar{y}_{\mathcal{T}}$ is a hierarchical expectation of $y$ and the theorem is the result of Theorem \ref{thm:Rademacher_complexity}. We construct the coupling $(w, w^\prime)$ as follows:
\[
w \coloneqq \bar{y}_{\mathcal{T}},
\]
and
\[
\mathbb{P}(w^\prime = e_j | w) \coloneqq w_j,
\]
where $w_j$ denotes the $j$-th entry of $w$. We now verify that $(w, w^\prime)$ is a coupling of $(\bar{y}_{\mathcal{T}}, y)$. The fact that $w$ and $\bar{y}_{\mathcal{T}}$ have the same distribution is easy. For $w^\prime$ and $y$, we have
\begin{align*}
\mathbb{P}(w^\prime = e_j) & = \mathbb{E}[\mathbb{P}(w^\prime = e_j | w)]\\
& = \mathbb{E}[w_j] \\
& = \mathbb{E}[\bar{y}_{\mathcal{T}, j}] \\
& = y_{\text{oracle}, j} \\
& = \mathbb{P}(y = e_j),
\end{align*}
where $\bar{y}_{\mathcal{T}, j}$ (or $y_{\text{oracle}, j}$) denotes the $j$-th entry of $\bar{y}_{\mathcal{T}}$ (or $y_{\text{oracle}}$), and the dependence on $x$ has been omitted. Here, the first equality is the tower property of the conditional expectation, the second due to the definition of $w^\prime$, the third because of the construction of $w$, the fourth on account of Assumption \ref{assm:marginal_unbiased_teacher}, and the last is the definition of $y_{\text{oracle}}$.

Hence $(w, w^\prime)$ is a coupling of $(\bar{y}_{\mathcal{T}}, y)$.

We combine that conclusion with the fact that
\[
\mathbb{E}[w^\prime | w] = w,
\]
leading to the conclusion that $\bar{y}_{\mathcal{T}}$ is a hierarchical expectation of $y$.
\end{proof}

\subsection{Generalization Bound under Rademacher Complexity}
\label{subapd:gen_bound}
The following proposition is a well-known generalization bound. We present it here only for the completeness of our argument as the proof can be found in any statistical learning lecture notes.

\begin{proposition}[Generalization Bound]
Suppose we have $\mathcal{D}_{\mathcal{Z}}$ as a dataset consisting of $n$ i.i.d. samples. For any hypothesis class $\mathcal{H}$, we have with probability at least $1-\delta$ that for every function $h \in \mathcal{H}$,
\[
\mathbb{E}\Big[\ell\big(Z_i, h(x_i, y_{i, 1}, y_{i, 2})\big)\Big] \leq \hat{\mathbb{E}}_{\mathcal{D}_{\mathcal{Z}}}\Big[\ell\big(Z_i, h(x_i, y_{i, 1}, y_{i, 2})\big)\Big] + 2 \mathrm{Rad}_{\mathcal{Z}, n}(\ell \circ \mathcal{H}) + \sqrt{\frac{\log(1/\delta)}{n}}.
\]
\end{proposition}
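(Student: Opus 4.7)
The plan is to follow the classical symmetrization-plus-concentration template that underlies generalization bounds via Rademacher complexity. Define the one-sided uniform deviation
\[
\Phi(\mathcal{D}_{\mathcal{Z}}) \coloneqq \sup_{h \in \mathcal{H}} \left\{\mathbb{E}\big[\ell(Z, h(x, y_1, y_2))\big] - \frac{1}{n}\sum_{i=1}^n \ell\big(Z_i, h(x_i, y_{i, 1}, y_{i, 2})\big)\right\}.
\]
The statement reduces to showing $\Phi(\mathcal{D}_{\mathcal{Z}}) \leq 2\,\mathrm{Rad}_{\mathcal{Z}, n}(\ell \circ \mathcal{H}) + \sqrt{\log(1/\delta)/n}$ with probability at least $1-\delta$, since for every $h \in \mathcal{H}$ the left-hand side of the stated inequality minus the empirical term is upper-bounded by $\Phi$. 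I would obtain this in two steps: (i) upper-bound $\mathbb{E}[\Phi(\mathcal{D}_{\mathcal{Z}})]$ by twice the Rademacher complexity via symmetrization, and (ii) control the deviation of $\Phi$ from its mean via McDiarmid's bounded-differences inequality.

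For the symmetrization step, I would introduce an independent ``ghost'' sample $\mathcal{D}_{\mathcal{Z}}^\prime$ drawn from the same distribution as $\mathcal{D}_{\mathcal{Z}}$, rewrite the population expectation as an expectation over $\mathcal{D}_{\mathcal{Z}}^\prime$, and pass the supremum inside the outer expectation through Jensen's inequality applied to the convex function $\sup$. The resulting quantity contains symmetric increments $\ell(Z_i^\prime, h(\cdot^\prime)) - \ell(Z_i, h(\cdot))$ whose joint distribution is invariant under independent $\pm 1$ sign flips $\varepsilon_i$, because the pairs $(\mathcal{D}_{\mathcal{Z},i}, \mathcal{D}_{\mathcal{Z},i}^\prime)$ are exchangeable. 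Inserting these Rademacher signs and splitting the sup over the two terms via the triangle inequality then yields $\mathbb{E}[\Phi(\mathcal{D}_{\mathcal{Z}})] \leq 2\,\mathrm{Rad}_{\mathcal{Z}, n}(\ell \circ \mathcal{H})$ directly from the definition of Rademacher complexity.

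For the concentration step, I would invoke McDiarmid's bounded-differences inequality under the implicit assumption that the loss $\ell$ takes values in a range of diameter at most $B$ (a mild assumption that holds, e.g., when rewards are clipped for the cross-entropy loss or when $\mathcal{H}$ is uniformly bounded for the hinge loss). Swapping a single coordinate of $\mathcal{D}_{\mathcal{Z}}$ leaves the population term unchanged and alters the empirical average by at most $B/n$, so $\Phi$ has bounded differences of magnitude $B/n$. McDiarmid then gives, with probability at least $1-\delta$,
\[
\Phi(\mathcal{D}_{\mathcal{Z}}) \leq \mathbb{E}[\Phi(\mathcal{D}_{\mathcal{Z}})] + B\sqrt{\frac{\log(1/\delta)}{2n}}.
\]
Combining this with the symmetrization bound and absorbing $B/\sqrt{2}$ into the leading constant of the last term yields the claimed inequality for every $h \in \mathcal{H}$ simultaneously.

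\textbf{Main obstacle.} This is essentially textbook material---the paper itself signals as much by deferring to ``any statistical learning lecture notes''---and no deep obstacle arises. The only points of care are (i) tracking the boundedness hypothesis needed to legitimately apply McDiarmid (without it, one would need a truncation argument or a different concentration inequality such as a Bernstein-type bound), and (ii) matching the numerical constants, since the natural derivation yields $B\sqrt{\log(1/\delta)/(2n)}$ whereas the statement writes simply $\sqrt{\log(1/\delta)/n}$; these reconcile under a normalization $B \leq \sqrt{2}$. A two-sided version, or one using the empirical (rather than expected) Rademacher complexity, would require one extra invocation of bounded differences, but the one-sided form stated in the proposition follows directly from the two steps above.
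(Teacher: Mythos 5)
Your proposal is the standard symmetrization-plus-McDiarmid argument, which is exactly the textbook proof the paper is invoking when it defers to ``any statistical learning lecture notes'' (the paper gives no proof of its own). You also correctly flag the two points the paper glosses over---the implicit boundedness of $\ell$ needed for McDiarmid and the constant mismatch that only reconciles when the loss range satisfies $B \leq \sqrt{2}$---so the proposal is complete and consistent with the intended argument.
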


\section{Implementation Details}

\subsection{Dataset Details}
\label{sec:dataset_details}

\textbf{Skywork-Reward-Preference-80K-v0.2.}
The Skywork-Reward-Preference-80K-v0.2 (SRP) dataset is a curated subset of publicly available preference data, spanning a wide range of knowledge domains. Reward models trained on this dataset have achieved top performance in the Reward Bench benchmark. The released version contains 77,016 samples, with approximately 5,000 overlapping samples removed compared to v0.1. In our experiments, we used Skywork-Reward-Gemma-2-27B-v0.2 to annotate data pairs with oracle scores. 

\textbf{Rationale For Scaling.} Instead of directly using the sigmoid values of the oracle model's score differences, we introduce a scaling parameter $T$ because the raw output scores of the oracle model are highly concentrated, as shown in Figure \ref{fig:raw_oracle_label_distrib}. However, it is generally understood that people often hold diverse opinions on preference samples, meaning real-world preference distributions should not be heavily concentrated near a probability of 1. To investigate, we consider two commonly used preference datasets with fine-grained preference scores, UltraFeedback \citep{cui2023ultrafeedback} and HelpSteer2 \citep{wang2024helpsteer2}. A visualization of their preference ratings is provided in Figure \ref{fig:pref_diff_examples}, where many samples show no strong preference but only weak agreement. Based on these observations, we carefully adjusted the scaling parameter $T$ and chose $T=\frac{20}{3}$, ensuring it produces a peak within the slight agreement interval (approximately 0.6-0.7), as shown in Figure \ref{fig:scaled_oracle_label_distrib}. This choice is further justified in the explanation of the tied sample experiments discussed later.

\begin{figure}[ht!]
\centering
    \begin{subfigure}{0.48\textwidth}
        \centering
        \includegraphics[width=\linewidth]{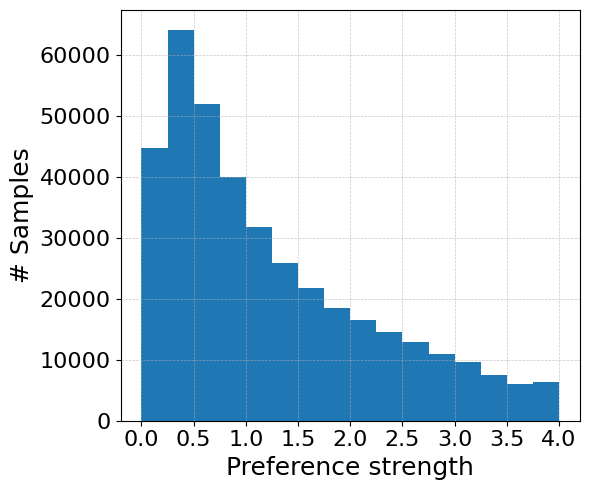}
        \caption{Preference strength distribution in UltraFeedback}
        \label{fig:pref_diff_ultra}
    \end{subfigure}
    \hfill
    \begin{subfigure}{0.48\textwidth}
        \centering
        \includegraphics[width=\linewidth]{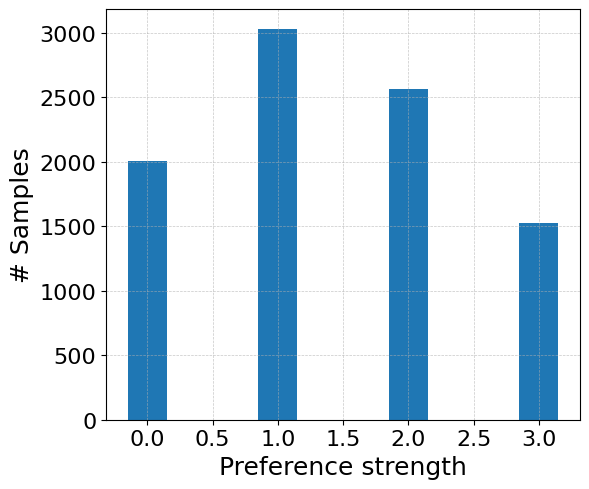}
        \caption{Preference strength distribution in HelpSteer2}
        \label{fig:pref_diff_help}
    \end{subfigure}
\caption{Distributions of preference strengths in the two datasets. For the UltraFeedback dataset, we compare the chosen and rejected scores pairwisely and use their differences as preference strengths. For the HelpSteer2 dataset, its latest version provides a preference strength label and we directly adopt it.}
\label{fig:pref_diff_examples}
\end{figure}

\begin{figure}[ht!]
\centering
    \begin{subfigure}{0.48\textwidth}
        \centering
        \includegraphics[width=\linewidth]{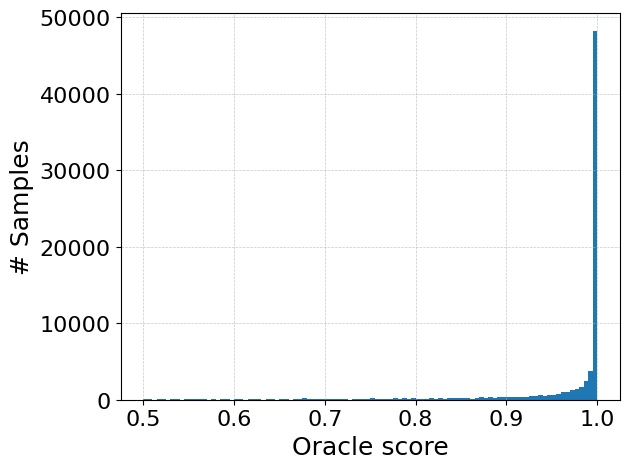}
        \caption{Distribution of raw oracle scores}
        \label{fig:raw_oracle_label_distrib}
    \end{subfigure}
    \hfill
    \begin{subfigure}{0.48\textwidth}
        \centering
        \includegraphics[width=\linewidth]{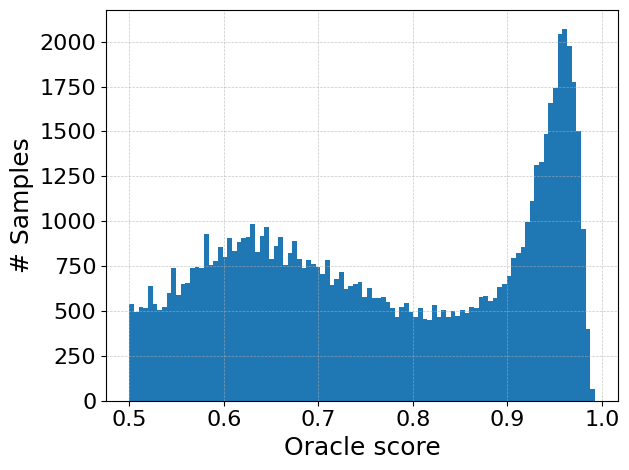}
        \caption{Distribution of scaled oracle scores}
        \label{fig:scaled_oracle_label_distrib}
    \end{subfigure}
\caption{Distribution of oracle labels before and after scaling with $T=\frac{20}{3}$. Note that the oracle score is always recorded for the chosen response relative to the rejected response, hence all oracle scores are no less than 0.5.}
\label{fig:oracle_label_distrib}
\end{figure}

\textbf{Tied Samples.}
As presented in Section \ref{subsec:experiment_setup}, we refer to preference pairs with a label $z = 0.5$ under the 3-level feedback system $\mathcal{Z}_3 = \{0, 0.5, 1\}$ as tied data or tied samples. We use the term ``tied'' because the labels $z$ are generated based on the oracle label $z_{\text{oracle}}$ using an interpolation paradigm, as described in Theorem \ref{thm:unbiased_interpolation}. These labels represent preference samples where people perceive (almost) equal advantages. The detailed sampling process is summarized in Algorithm \ref{alg:sampling}. And it can be easily extended to any ordinal feedback system.

Intuitively, the closer the oracle label of a preference pair is to 0.5, the more likely it is to be assigned a label of $0.5$. An important observation is that the sampled label distribution is dominated by $z_{\text{oracle}}$. To ensure sufficient binary and tied data for the tied ratio experiments, we aim for the binary and tied data to each constitute half of the dataset. After tuning, we found that selecting $T=\frac{20}{3}$ not only simulates a real-world preference data distribution but also satisfies the tied ratio experiment requirements. This further justifies the choice of $T$.

\begin{algorithm}[H]
\caption{3-level Sampling Algorithm}
\label{alg:sampling}
\begin{algorithmic}[1]
\Require $z_{\text{oracle}} \in [0, 1]$ \Comment{Oracle label}
\Ensure $z \in \mathcal{Z}_3 = \{0, 0.5, 1\}$ \Comment{Sampled label}
\If{$z_{\text{oracle}} < 0.5$}
    \State Sample $y \sim \text{Bernoulli}\left(\frac{z_{\text{oracle}}}{0.5}\right)$
    \State $z \gets 0.5 \cdot y$
\ElsIf{$z_{\text{oracle}} > 0.5$}
    \State Sample $y \sim \text{Bernoulli}\left(\frac{z_{\text{oracle}} - 0.5}{0.5}\right)$
    \State $z \gets 0.5 \cdot y + 0.5$
\Else
    \State $z \gets 0.5$
\EndIf
\State \Return $z$
\end{algorithmic}
\end{algorithm}

\textbf{RewardBench Evaluation.}
The RewardBench evaluation dataset combines multiple datasets across four categories: Chat, Chat Hard, Safety, and Reasoning. The scoring follows a standard reward modeling paradigm, where success is defined as the chosen response having a higher score than the rejected response for a given prompt. The evaluation score is computed as a weighted average across all prompts in the selected subset.

\subsection{Training Details}
\label{sec:training_details}

We choose some key training hyperparameters based on grid search. The performance is assessed by in-distribution evaluation (CE) loss under oracle label settings. The grid search space is shown in Table \ref{tab:hyperparam_search}.

\begin{table}[h!]
\centering
\caption{Hyperparameter Search Space}
\begin{tabular}{lc}
\toprule
\textbf{Hyperparameter} & \textbf{Search Range/Values} \\
\midrule
Learning Rate           & [1e-5, 5e-6, 2e-5] \\
Batch Size              & [64, 128] \\
Warm-up Ratio           & [0.03, 0.05, 0.10] \\
\bottomrule
\end{tabular}
\label{tab:hyperparam_search}
\end{table}

The two base models share most of the training parameters, as given in Table \ref{tab:shared_hyperparam}. The different parameters are listed in Table \ref{tab:modelspecified_hyperparam}. 

\begin{table}[h!]
\centering
\caption{Shared Hyperparameters}
\begin{tabular}{lc}
\toprule
\textbf{Hyperparameter} & \textbf{Value} \\
\midrule
Batch Size                  & 128 \\
Optimizer                   & paged\_adamw\_32bit \\
Weight Decay                & 1e-3 \\
Epochs                      & 2 \\
Scheduler                   & Linear Warm-up + Cosine Decay \\
\bottomrule
\end{tabular}
\label{tab:shared_hyperparam}
\end{table}

\begin{table}[h!]
\centering
\caption{Model-specified Hyperparameters}
\begin{tabular}{lcc}
\toprule
\textbf{Hyperparameter} & \textbf{Llama-3.2-1b} & \textbf{Gemma-2-2b} \\
\midrule
Learning Rate               & 1e-5 & 5e-6 \\
Warm-up Ratio               & 0.1 & 0.05 \\
\bottomrule
\end{tabular}
\label{tab:modelspecified_hyperparam}
\end{table}

\end{document}